\newtheorem{theorem}{Theorem}[section]
\newtheorem{definition}[theorem]{Definition}
\title{Understanding Generalization in Role-Playing Models\\ via Information Theory}
\author{
\textbf{Yongqi Li$^{1,2,*,\dagger}$}, 
\textbf{Hao Lang$^{2,\dagger}$}, 
\textbf{Fei Huang$^{2}$}, 
\textbf{Tieyun Qian$^{1,3,\ddag}$},
\textbf{Yongbin Li$^{2,\ddag}$}\\
        $^1$ School of Computer Science, Wuhan University, $^2$ Tongyi Lab\\
        $^3$ Zhongguancun Academy\\
        \small{\texttt{\{liyongqi,qty\}@whu.edu.cn}, \texttt{\{hao.lang,f.huang,shuide.lyb\}@alibaba-inc.com}}
        }
\begin{document}
\maketitle

\renewcommand{\thefootnote}{\fnsymbol{footnote}}
\footnotetext[1]{{ }Work done while the author was interning at Tongyi Lab.}
\footnotetext[2]{{ }Equal contributions.}
\footnotetext[3]{{ }Corresponding authors.}
\renewcommand{\thefootnote}{\arabic{footnote}}


\begin{abstract}
Role-playing models (RPMs) are widely used in real-world applications but underperform when deployed in the wild.
This degradation can be attributed to distribution shifts, including user, character, and dialogue compositional shifts.
Existing methods like LLM-as-a-judge fall short in providing a fine-grained diagnosis of how these shifts affect RPM generalization, and thus there lack formal frameworks to characterize RPM generalization behaviors.
To bridge these gaps, we introduce an information-theoretic metric, named reasoning-based effective mutual information difference (R-EMID), to measure RPM performance degradation in an interpretable way.
We also derive an upper bound on R-EMID to predict the worst-case generalization performance of RPMs and theoretically reveal how various shifts contribute to the RPM performance degradation.
Moreover, we propose a co-evolving reinforcement learning framework to adaptively model the connection among user, character, and dialogue context and thus enhance the estimation of dialogue response generation probability, which is critical for calculating R-EMID.
Finally, we evaluate the generalization performance of various RPMs using R-EMID, finding that user shift poses the highest risk among all shifts and reinforcement learning is the most effective approach for enhancing RPM generalization. Code and data are available at \url{https://github.com/AlibabaResearch/DAMO-ConvAI/tree/main/RPM-Generalization}.
\end{abstract}



\section{Introduction}
As large language models (LLMs)~\cite{openai-2024-gpt4,anthropic-2022-claude} become more capable, the development of role-playing models (RPMs)~\cite{chen-2024-oscarsSurvey,chen-2024-personaSurvey} has emerged as a key direction in advancing human-AI interaction. 
By simulating characters with distinct personalities, RPMs enable emotionally rich and contextually grounded dialogues, making them particularly valuable in fields such as entertainment~\cite{park-2023-generative}, personalized education~\cite{pataranutaporn-2021-ai}, and emotional companionship~\cite{liu-2021-towardsemotional}. 

In real-world applications, the generalization abilities of RPMs are crucial for user experiences.
For instance, global-facing platforms such as \citet{characterAI-2022-characterAI} serve users from diverse unseen linguistic and cultural backgrounds, and users may also request RPMs to simulate previously unseen characters. 
However, recent studies have reported failures in RPMs, such as generating culturally inappropriate responses~\cite{li-2024-culturepark} or exhibiting character inconsistency~\cite{wang-2025-opencharacter}, raising concerns about their reliability.

These failure cases suggest that current RPMs underperform when deployed in the wild, which can be attributed to \textit{distribution shifts} between training and test data.
We categorize distribution shifts in RPMs into three types: user shift (e.g., unseen linguistic or cultural backgrounds), character shift (e.g., unseen character types), and dialogue compositional shift (e.g., longer and more complex multi-turn interactions).
Understanding how these shifts contribute to RPM performance degradation is essential for building systems that generalize reliably in real‑world applications. 

Although existing methods, such as LLM-as-a-judge~\cite{tu-2024-charactereval}, can empirically quantify RPM performance degradation under distribution shifts, they fall short in providing a fine-grained diagnosis of how different shifts contribute to the degradation. 
More importantly, without a formal framework connecting performance degradation to distribution shifts, it is difficult to gain a systematic understanding of RPM generalization. 
This gap motivates our research question: \textit{\textbf{Can we develop a theoretical framework that formally characterizes how user, character, and dialogue compositional shift impact RPM generalization?}}

To address this, we propose an information-theoretic metric to quantify RPM performance degradation under distribution shifts, named reasoning-based effective mutual information difference (R-EMID).
Concretely, we build R-EMID based on EMID~\cite{oh-2025-understanding}, which is the first interpretable information-theoretic method for measuring the generalization performance of foundation models. 
To accurately estimate the conditional response generation probability required by EMID, we introduce an additional reasoning process to infer connections between user persona, agent character, dialogue history, and the target response. 
Moreover, we derive an upper bound for R-EMID, which enables the prediction of worst-case generalization performance of RPMs and theoretically reveals how various shifts contribute to the RPM performance drop.

To robustly generate the chain-of-thought for the reasoning phase required by R-EMID, we propose a co-evolving reinforcement learning (CoRL) framework to iteratively optimize a reasoning generator and a dialogue policy model. 
Specifically, the reasoning generator generates thoughts that help the current policy model select useful information from user, character, and dialogue history for conditional probability estimation, while the policy model produces dialogue responses and provides feedback for optimizing the reasoning generator and keeping it in-distribution.
Extensive experimental results show strong correlations between R-EMI/R-EMID and LLM-as-a-judge metrics, as well as correlations between R-EMID and its upper bounds, which validate the proposed R-EMI/R-EMID metrics and CoRL.

Finally, we evaluate the generalization performance of various RPMs using R-EMID.
Our analysis yields three findings:
1) Among the shifts considered, the user shift poses the highest risk to RPM reliability.
2) Reinforcement learning offers consistent generalization gains over standard supervised fine‑tuning, whereas data augmentation delivers inconsistent results.
3) Naively incorporating reasoning traces fails to improve RPM generalization.

In this work, we make the following key contributions:
1) We identify three types of distribution shifts in RPMs, including user, character, and dialogue compositional shifts.
2) We introduce the first information-theoretic framework, named R-EMID, for characterizing RPM performance drop under various shifts. Additionally, we propose CoRL for estimating R-EMID and validate its effectiveness on real-world datasets. 
3) We systematically evaluate the generalization performance of various RPMs using R-EMID, providing several valuable insights.


\section{Background}
\subsection{Problem Formulation}
Let $\mathcal{X}$ and $\mathcal{Y}$ denote the input and output spaces of role-playing models (RPMs), respectively.
We define variables $X = (X_u, X_a, X_d) \in \mathcal{X}$ and $Y \in \mathcal{Y}$, where $X$ represents the input consisting of user persona $X_u$, agent character $X_a$, and dialogue context $X_d$, while $Y$ denotes the response. 
We denote the joint distribution as $P_{XY}$, with marginals $P_{X}$, $P_{Y}$, and conditional distribution $P_{Y|X}$.
In subsequent sections, $P_{XY}$ refers to the training data distribution (in-distribution, ID) while $Q_{XY}$ refers to the test distribution (out-of-distribution, OOD).

Let $P_{\theta}: \mathcal{X} \to \mathcal{Y}$ be an RPM parameterized by $\theta$, trained on ID data. Given an input $X \in \mathcal{X}$, the model produces a response $Y_{\theta} \in \mathcal{Y}$. The marginal distribution of the model responses on ID and OOD data is given by $P_{Y_{\theta}} = \mathbb{E}_{x \sim P_X} \left[ P_{\theta}(\cdot|x) \right]$ and $Q_{Y_{\theta}} = \mathbb{E}_{x \sim Q_X} \left[ P_{\theta}(\cdot|x) \right]$, repectively.

\begin{figure}[t]
\centering
\includegraphics[width=0.42\textwidth]{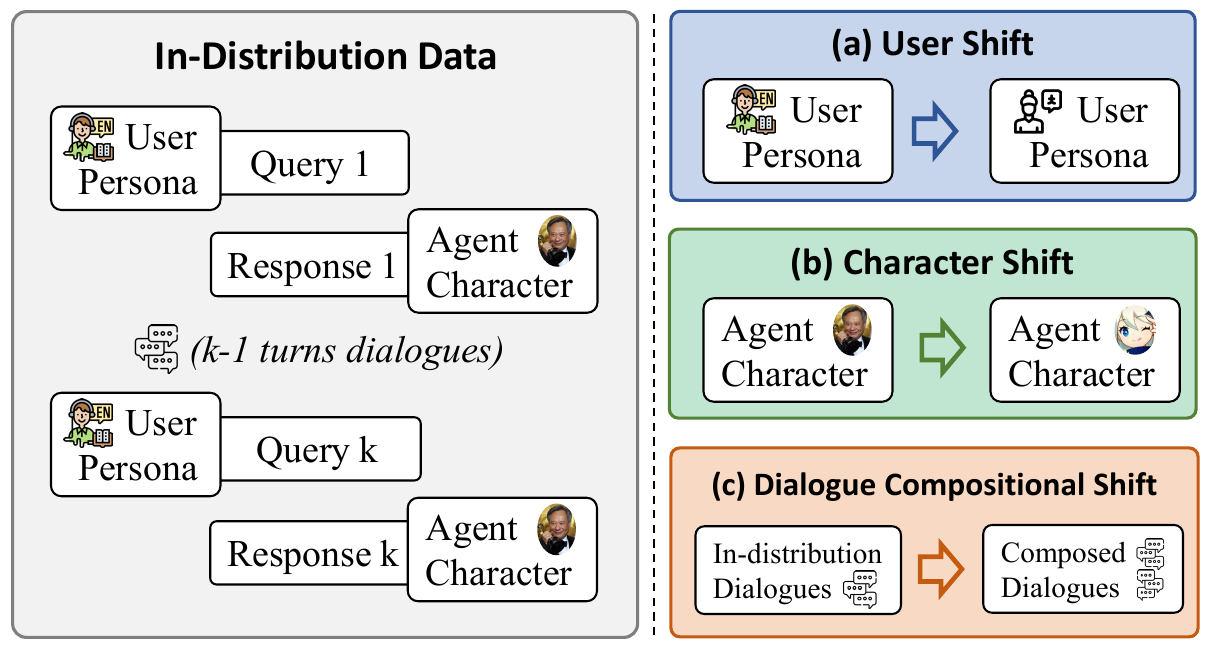}
\caption{Distribution shifts in RPMs. (a) User shift: distribution change in user persona, e.g., \texttt{EN$\rightarrow$ZH}; (b) Character shift: distribution change in agent character, e.g., unseen character types; (c) Dialogue compositional shift: distribution change in dialogues, e.g., from short-turn dialogue to composed long-turn dialogue.}
\label{fig:shift_definition}
\end{figure}


\begin{figure*}[t]
\centering
\includegraphics[width=0.9\textwidth]{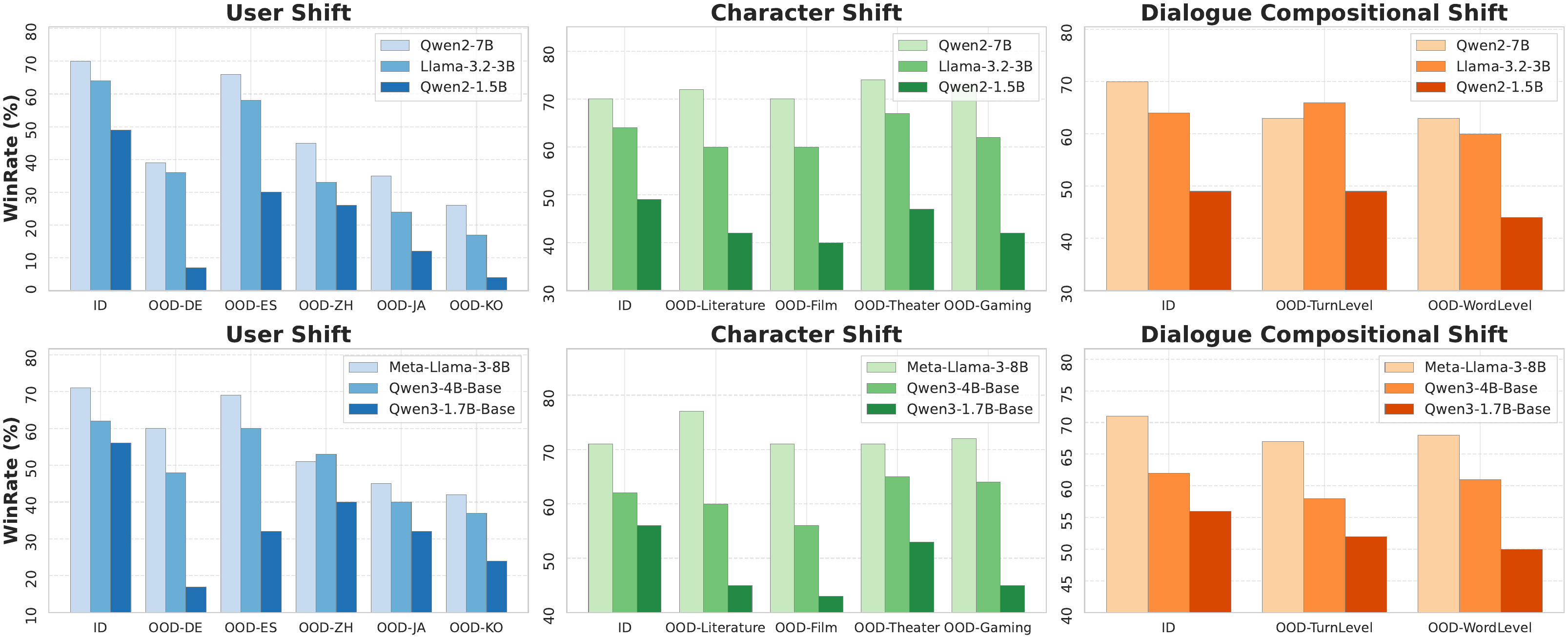}
\caption{Pilot study of RPM performance under various distribution shifts, including user shift (left), character shift (middle), and dialogue compositional shift (right). The RPMs are trained with supervised fine-tuning based on six LLMs (extended results based on more LLMs are provided in Appendix~\ref{sec:app:pilot_study_more_results}).
Here, the x-axis is sorted by the severity of the distribution shifts, while the y-axis represents RPM performance measured by \textit{WinRate}.
We observe a consistent trend: as the severity of the distribution shift increases, performance degrades more significantly.}
\label{fig:djs_drop_compare}
\end{figure*}

\subsection{Distribution Shifts in RPMs}\label{sec:shift_definition}
Despite the expectation that RPMs should robustly handle unseen users, novel characters, and evolving dialogues in real-world deployment, recent reports of failures in these edge cases have raised concerns about their reliability~\cite{roose-2024-can}.
We argue that these failures can be attributed to the poor generalization when the test data deviates from the training data, i.e., under \textit{distribution shifts}.

To understand these failure modes, as shown in Figure~\ref{fig:shift_definition}, we first identify three types of distribution shifts that RPMs may encounter in real-world deployments:

\begin{itemize}
    \item \textbf{\textit{User Shift}}: the marginal distribution of user persona undergoes shift $D(P_{X_u}||Q_{X_u})\gg 0$.
    \item \textbf{\textit{Character Shift}}: the marginal distribution of agent character exhibits substantial divergence $D(P_{X_a}||Q_{X_a})\gg 0$.
    \item \textbf{\textit{Dialogue Compositional Shift}}: the marginal distribution of dialogue exhibits shift $D(P_{X_d}||Q_{X_d})\gg 0$. By ``compositional'', we mean that test dialogues are constructed by recombining turns or words from ID dialogues.
\end{itemize}

Here, $D(\cdot||\cdot)$ denotes a divergence metric that quantifies the discrepancy between the training distribution $P$ and the test distribution $Q$. In this paper, we adopt the Jensen-Shannon (JS) divergence~\cite{menendez-1997-JSD} as our choice of $D(\cdot||\cdot)$.
Based on these definitions, this work aims to \textit{understand RPM behaviors, especially performance drop, under these distribution shifts}.

\subsection{Pilot Study}\label{sec:pilot_study}
A natural hypothesis is that as the degree of distribution discrepancy increases, the magnitude of RPM performance degradation under distribution shifts also increases. To test this hypothesis, we first conduct an empirical pilot study to evaluate the performance drop of RPMs under the three types of distribution shifts described above.

Since no existing datasets can systematically evaluate RPM performance under the above distribution shifts, we introduce \textbf{RPGBench}, a novel benchmark constructed according to the shift definitions in Sec.~\ref{sec:shift_definition}. The RPGBench comprises a total of 17k samples, including 5k ID samples, 5k OOD samples for user shift, 6k OOD samples for character shift, and 1k OOD samples for dialogue compositional shift. Each sample contains three input components: agent character, user persona, and dialogue context, and a reference response for training or evaluation~\footnote{The dataset is available at \url{https://huggingface.co/datasets/YongqiLi/RPGBench}.}.

Specifically, first, for user shift, we collect user personas described in English to construct the ID data. The OOD test data are constructed based on user personas with five non-English linguistic and cultural backgrounds, including \textit{German, Spanish, Chinese, Japanese, and Korean}. Second, for character shift, we construct the ID data using real-world characters, while the OOD data are built from fictional characters from \textit{literature, film, theater, and gaming} domains. Third, for dialogue compositional shift, OOD dialogues are crafted by recombining turns or words from ID dialogues. At the turn level, dialogues shift from 4 turns to 8 turns, while at the word level, test queries are rewritten using words in the training queries.
Please refer to Appendix~\ref{sec:app:bench_details} for construction details.

We select eleven LLMs from the Qwen~\cite{yang-2025-qwen3} and LLaMA~\cite{meta-2023-llama3} series, and fine-tune them on the ID training data of RPGBench via supervised fine-tuning.
The RPM performance is measured by \textit{WinRate}, which is an LLM-as-a-judge metric that is commonly adopted in previous evaluation methods for role-playing tasks~\cite{wang-2024-rolellm}. 
We calculate the \textit{WinRate} by comparing the model response with the reference response in the dataset.
To reduce potential bias from individual LLM judges~\cite{zheng-2023-judging}, we adopt an ensemble strategy.
Specifically, we average scores from multiple state-of-the-art LLMs, including Qwen3-235B-A22B, GPT-4o-mini, and Claude-3.7-Sonnet, as the final score.

Figure~\ref{fig:djs_drop_compare} shows the performance drop (y-axis) of RPMs built on different LLMs, evaluated across 11 shift subtypes, covering user, character, and dialogue shifts.
The x-axis is sorted by the severity of shifts, measured by $D_{JS}(P_{X}||Q_{X})$.
From the figure, we observe a trend: as the severity of the distribution shift increases, the magnitude of performance degradation increases.
\textit{These empirical observations suggest the potential existence of a fundamental principle connecting performance drop with distributional shifts in RPMs, which motivates the following theoretical investigation into the underlying mechanisms.
}

\section{Understanding RPM Generalization via Information Theory}
In this section, we will introduce: 1) the existing EMID (Sec.~\ref{sec:theory_pre}); 2) the proposed R-EMID to quantify the performance degradation of role-playing models (RPMs) (Sec .~\ref {sec:theory_our_REMID}); 3) the proposed CoRL for estimating R-EMID (Sec .~\ref {sec:theory_REMID_CoEvolve}); and 4) experiments to validate R-EMID and CoRL (Sec .~\ref {sec:empirical_validation}).

\subsection{Preliminary}\label{sec:theory_pre}
This section introduces effective mutual information difference (EMID)~\cite{oh-2025-understanding}, the first information-theoretic metric for measuring the generalization performance of foundation models.
\begin{definition}[{EMID}]\label{definition:EMID_definition}
The EMID of a model $P_{\theta}$ with parameters $\theta$ is defined as:
\begin{equation*}
\footnotesize
\begin{split}
\text{EMID}(P_{XY}, Q_{XY}; P_{\theta}):=\text{EMI}(P_{XY}; P_{\theta}) - \text{EMI}(Q_{XY}; P_{\theta})\\
:= \left( I(P_{XY_{\theta}}) - I(P_{XY}) \right) - \left( I(Q_{XY_{\theta}}) - I(Q_{XY}) \right),
\end{split}
\end{equation*}
where EMI measures the quality of model responses by quantifying how much they deviate from the ground-truth ones.
The standard mutual information $I(P_{XY})$ reflects the degree of dependence between $P_{X}$ and $P_{Y}$, as defined by~\citet{shannon-1948-mathematical}.
\end{definition}

EMID can characterize the model performance drop under distribution shifts from $P$ (training data) to $Q$ (test data). Compared to black-box evaluation methods, such as LLM-as-a-judge~\cite{zheng-2023-judging}, EMID offers two key advantages: 1) It is theoretically grounded and enjoys strong interpretability. 2) The theoretical upper bound of EMID is derivable and can characterize the model's maximum generalization risk, which is important in safety-critical applications~\cite{yu-2024-OODsurvey}.

\subsection{Reasoning-based EMID}\label{sec:theory_our_REMID}

While EMID offers a principled way to quantify model performance degradation under distribution shifts, its direct application to RPMs faces significant challenges. Specifically, estimating EMID relies on mutual information estimators such as CLUB~\cite{cheng-2020-club}, which requires accurately estimating the response generation probability conditioned on the input, denoted as $p(y|x)$. 
However, the input of RPMs is inherently heterogeneous, comprising user persona, agent character, and dialogue context.
Directly modeling such heterogeneous inputs is quite challenging and may lead to information loss, making the accurate estimation of $p(y|x)$ difficult.

To address this, we propose \emph{reasoning-based EMID} (R-EMID), which introduces an intermediate reasoning process $R$ to infer the connection between user, character, dialogue, and the targeted response. 
For instance, $R$ may encode information about what stylistic and content features an ideal response should exhibit.
This reasoning process enables the extraction of heterogeneous information from $X$, without the need to explicitly model the complex dependencies in $X$.
By introducing the reasoning process $R$, R-EMID offers a more tractable way for measuring RPM performance degradation under distribution shifts.

\begin{definition}[{Reasoning-based EMI, R-EMI}]\label{definition:REMI}
Given a RPM $P_{\theta}$ parameterized by $\theta$, its R-EMI under a joint distribution $P_{XY}$ is defined as:
\begin{equation*}
\footnotesize
\begin{split}
    \text{R-EMI}(P_{XY}; P_{\theta}) := I(P_{X_{R}Y_{\theta}}) - I(P_{X_{R}Y}),
\end{split}
\end{equation*}
where the definition of mutual information $I(\cdot)$ is the same as in Definition~\ref{definition:EMID_definition}.
The newly introduced variable $X_R$ is obtained via a mapping function $f_R(\cdot)$, which augments the input $X$ into the $X_R=(X,R=f_R(X))$.
Similar to EMI, R-EMI also aims to quantify the quality of responses produced by the model $P_{\theta}$ under certain distributions.
\end{definition}

Now, based on R-EMI, we can quantify the performance drop of RPMs with formal guarantees via reasoning-based EMID (R-EMID).
\begin{definition}[{Reasoning-based EMID, R-EMID}]\label{definition:REMID}
We define R-EMID as the difference between the R-EMI on the ID distribution and OOD distribution: $\text{R-EMID}(P_{XY}, Q_{XY}; P_{\theta}):=\text{R-EMI}(P_{XY}; P_{\theta}) - \text{R-EMI}(Q_{XY}; P_{\theta})$. R-EMID can be used to characterize the performance degradation of RPMs under distribution shifts from $P_{XY}$ to $Q_{XY}$.
\end{definition}

Furthermore, we derive an upper bound on R-EMID in Theorem~\ref{theorem:REMID_upperbound}, which characterizes the maximum performance drop of RPMs under distribution shifts, enabling the maximum risk analysis in safety-critical applications.
Moreover, this bound principally reveals the underlying connection between the RPM performance drop and the severity of user, character, and dialogue shifts.

\begin{theorem}[Upper Bound on R-EMID]\label{theorem:REMID_upperbound}
Given an RPM $P_{\theta}$ trained on the distribution $P_{XY}$ and tested on $Q_{XY}$, which share consistent conditional distributions $Y|X$. If  there exist constants $\delta_{P}$ and $\delta_{Q}$ such that $D_{\rm JS}(P_{Y_{{\theta}}}\|P_{Y})\leq \delta_{P}, D_{\rm JS}(Q_{Y_{{\theta}}}\|Q_{Y})\leq \delta_{Q}, \Delta = \delta_{P} + \delta_{Q}$, then the $\text{R-EMID}(P_{XY}, Q_{XY}; P_{\theta})$ is upper bounded by:
\begin{equation}\label{eq:REMID_bound_simple}
\footnotesize
\sqrt{2/3}~\widehat{H} \sum_{z \in \{u, a, d\}} D^{1/2}_{\rm JS}(P_{X_z} \| Q_{X_z}) + 8\Delta^{1/4},
\end{equation}
where $\widehat{H}=\max_{x \in \mathcal{X}} [H(Q_{Y|x,r})+H(P_{Y_{\theta}|x,r})]$, and $r=f_R(x)$ denotes the reasoning process~\footnote{Please refer to Appendix~\ref{sec:app:theory_details} for an extended theoretical analysis with full proof.}.
\end{theorem}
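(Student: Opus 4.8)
The plan is to open up $\text{R-EMID}$ via Definitions~\ref{definition:REMI} and~\ref{definition:REMID}, write each of the four mutual informations as a marginal entropy minus a conditional entropy, and regroup the eight resulting entropy terms into two blocks: a \emph{marginal block} $\{[H(P_{Y_\theta})-H(P_Y)]-[H(Q_{Y_\theta})-H(Q_Y)]\}$ and a \emph{conditional block} $\{[H(P_{Y_\theta|X_R})-H(P_{Y|X_R})]-[H(Q_{Y_\theta|X_R})-H(Q_{Y|X_R})]\}$, with $\text{R-EMID}$ equal to the former minus the latter. The $8\Delta^{1/4}$ summand of~\eqref{eq:REMID_bound_simple} will come from the marginal block, the first ($\sqrt{2/3}\,\widehat H$) summand from the conditional block, and the triangle inequality glues them together.

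For the conditional block, the crucial point is that the hypotheses force the per-input gap $\phi(x,r):=H(P_{Y_\theta|x,r})-H(P_{Y|x,r})$ to be a \emph{single} function of $(x,r)$, identical in the ID and OOD worlds: $P_{Y|x,r}=P_{Y|x}=Q_{Y|x}=Q_{Y|x,r}$ since conditioning additionally on $R$ adds nothing beyond $X$ ($R$ is drawn from $X$) and $P,Q$ share $Y|X$, while $P_{Y_\theta|x,r}=Q_{Y_\theta|x,r}$ because it is just a fixed model channel depending only on $(x,r)$. Hence the conditional block equals $\int\phi\,\mathrm{d}(P_{X_R}-Q_{X_R})$, which is at most $\|\phi\|_\infty\,\|P_{X_R}-Q_{X_R}\|_1$; data-processing gives $\|P_{X_R}-Q_{X_R}\|_1\le\|P_X-Q_X\|_1$ and, by its definition, $\|\phi\|_\infty\le\widehat H$. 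It then remains to (i) control $\|P_X-Q_X\|_1$ by the shifts of the three components of $X=(X_u,X_a,X_d)$ through a chain-rule/subadditivity argument over the compositional input — which is where the sum over $z\in\{u,a,d\}$ is born — and (ii) convert each component's total variation into $D_{\rm JS}^{1/2}(P_{X_z}\|Q_{X_z})$ via a Pinsker-type inequality, which produces the square-root exponent and the constant $\sqrt{2/3}$.

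For the marginal block, each bracket $H(P_{Y_\theta})-H(P_Y)$ is controlled by a modulus-of-continuity estimate for Shannon entropy in terms of $D_{\rm JS}(P_{Y_\theta}\|P_Y)\le\delta_P$, and $H(Q_{Y_\theta})-H(Q_Y)$ likewise by $\delta_Q$; chaining Pinsker with a (dimension-free) continuity bound for entropy yields terms of order $\delta_P^{1/4}$ and $\delta_Q^{1/4}$, and concavity of $t\mapsto t^{1/4}$ (whence $\delta_P^{1/4}+\delta_Q^{1/4}\le 2^{3/4}\Delta^{1/4}$) folds these into $8\Delta^{1/4}$.

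I expect the marginal block to be the main obstacle, for two reasons. First, the hypothesis bounds only the \emph{marginal} output deviation $D_{\rm JS}(P_{Y_\theta}\|P_Y)$, so the proof must be organised — via the marginal/conditional split above rather than a naive term-by-term bound — so that model and ground-truth entropies are only ever compared at the marginal level, never conditionally. Second, upgrading a small $D_{\rm JS}$ to a small entropy gap needs a modulus of continuity for Shannon entropy that is uniform over the large response space $\mathcal Y$; securing the clean exponent $1/4$ with an \emph{absolute} constant, rather than a bound that degrades with $\log|\mathcal Y|$, is the delicate quantitative step, and is where the analysis leans on the EMID machinery of~\citet{oh-2025-understanding}. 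A secondary difficulty is the compositional decomposition in the conditional block: obtaining a clean sum over the three \emph{marginals} $P_{X_z},Q_{X_z}$ (rather than conditional divergences) requires near-independence of $X_u,X_a,X_d$ or a telescoping coupling, and the constants must be tracked carefully to land on $\sqrt{2/3}$.
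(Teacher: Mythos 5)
Your route is genuinely different from the paper's, and it is worth saying what the paper actually does: it does not re-derive anything about the entropy or $\Delta$ terms. The appendix proof simply invokes the EMID upper bound of \citet{oh-2025-understanding}, namely $\mathrm{EMID}\le \widehat{H}\sqrt{2D_{\rm JS}(P_X\|Q_X)}+8\Delta^{1/4}$, notes that with the reasoning variable the entropies in $\widehat{H}$ are conditioned on $(x,r)$ (harmless, since $r=f_R(x)$ is deterministic in $x$), and then rewrites the input-shift factor: it writes three symmetric chain-rule decompositions of $D_{\rm KL}(P_{X_uX_aX_d}\|M)$ and $D_{\rm KL}(Q_{X_uX_aX_d}\|M)$, \emph{explicitly discards the conditional-divergence terms as a modeling simplification} (stated in the appendix, following the prior work), averages the three to get $2D_{\rm JS}(P_X\|Q_X)=\tfrac{2}{3}\sum_{z\in\{u,a,d\}}D_{\rm JS}(P_{X_z}\|Q_{X_z})$, and finishes with $\sqrt{a+b+c}\le\sqrt{a}+\sqrt{b}+\sqrt{c}$. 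The constant $\sqrt{2/3}$ comes from that averaging, and the $8\Delta^{1/4}$ term is inherited verbatim from the cited theorem.

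Measured against that, two steps of your plan have genuine gaps. First, the joint-to-marginal step in your conditional block: neither total variation nor JS divergence of the joint law of $X=(X_u,X_a,X_d)$ is subadditive over the three marginals --- two joints can differ substantially while all three marginals coincide --- and the telescoping coupling you mention produces \emph{conditional} divergences, not marginal ones. The paper obtains a marginal-only bound solely because it assumes the conditional dependency structure is (nearly) shared between $P$ and $Q$ and drops those terms outright; without importing that assumption your step (i) is false as stated, and even with it a Pinsker-type conversion of component TVs would not naturally land on $\sqrt{2/3}$. Second, your marginal block: there is no dimension-free modulus of continuity of Shannon entropy with respect to JS divergence (Fannes/Audenaert-type bounds carry a $\log|\mathcal{Y}|$ factor), so ``Pinsker plus entropy continuity'' cannot deliver $8\Delta^{1/4}$ with an absolute constant over a large response space. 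The paper never attempts this; the entire structure $\widehat{H}\sqrt{2D_{\rm JS}(P_X\|Q_X)}+8\Delta^{1/4}$ --- including the fact that the \emph{conditional} entropies in $\widehat{H}$ multiply the input-shift term --- is taken from the cited EMID theorem, which your marginal/conditional split would have to re-derive. As written, your sketch therefore does not reach the stated bound unless you either invoke that theorem as a black box (as the paper does) or supply both the missing shared-conditional assumption and a substitute for the entropy-continuity step.
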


\begin{tcolorbox}[
    width=\linewidth,
    colback=white!96!black,  
    boxrule=1pt,           
    colframe=black!60,       
    arc=2pt                 
]
\noindent \textbf{Implication.}
Theorem~\ref{theorem:REMID_upperbound} implies that the RPM performance drop is affected by the combined contribution of marginal distribution shifts across user ($u$), agent character ($a$), and dialogue ($d$) components, as quantified by $\sum_{z \in \{u, a, d\}} D^{1/2}_{\mathrm{JS}}(P_{X_z} \| Q_{X_z})$. Additionally, it is influenced by the maximum model response uncertainty, i.e., $\hat{H}$, and the maximum deviation between model predictions and the reference responses, i.e., $\Delta$.
\end{tcolorbox}


\subsection{R-EMID Estimation}\label{sec:theory_REMID_CoEvolve}
Based on Definitions~\ref{definition:REMI} and~\ref{definition:REMID}, the estimation of R-EMI and R-EMID relies on estimating mutual information terms.
For example, to estimate mutual information $I(P_{X_R Y})$, we first need to collect $N$ samples $\{(x_i, r_i, y_i)\}_{i=1}^{N}$, where each $(x_i, y_i)$ can be directly sampled from $P_{XY}$, while the reasoning process $r_i$ needs to be obtained via a mapping function $f_R$, i.e., $r_i=f_R(x_i)$.
Then, the estimated $\hat{I}(P_{X_R Y})$ can then be calculated as:
\begin{equation}\label{eq:RMI_club_estimation}
\footnotesize
    \frac{1}{N} \sum_{i=1}^{N} \log p(y_i | x_i, r_i) - \frac{1}{N^2} \sum_{i=1}^{N} \sum_{j=1}^{N} \log p(y_j | x_i, r_i).
\end{equation}

In practice, both the reasoning mapping function $f_R$ and the conditional distribution $p(y|x,r)$ are inaccessible. Therefore, as shown in Figure~\ref{fig:CoEvolveRL}, we approximate them using two LLM-based parametric functions: a reasoning generator $q_{\phi_1}(\cdot)$ and a policy model $q_{\phi_2}(\cdot|\cdot)$.

To optimize the reasoning generator $q_{\phi_1}(\cdot)$ and policy model $q_{\phi_2}(\cdot|\cdot)$, we propose a \textbf{co-evolving reinforcement learning (CoRL)} framework (Figure~\ref{fig:CoEvolveRL}).
During co-evolution, the policy model $q_{\phi_2}$ provides rewards for optimizing the reasoning generator $q_{\phi_1}$ to keep it in distribution. The reasoning generator generates reasoning processes $r$ to assist the current policy model in selecting useful information from the input $x$ for estimating the conditional probability.
Note that CoRL is only used during the training stage, and the trained model can be reused indefinitely.

\vspace{0mm}
\begin{figure}[t]
\centering
\includegraphics[width=0.47\textwidth]{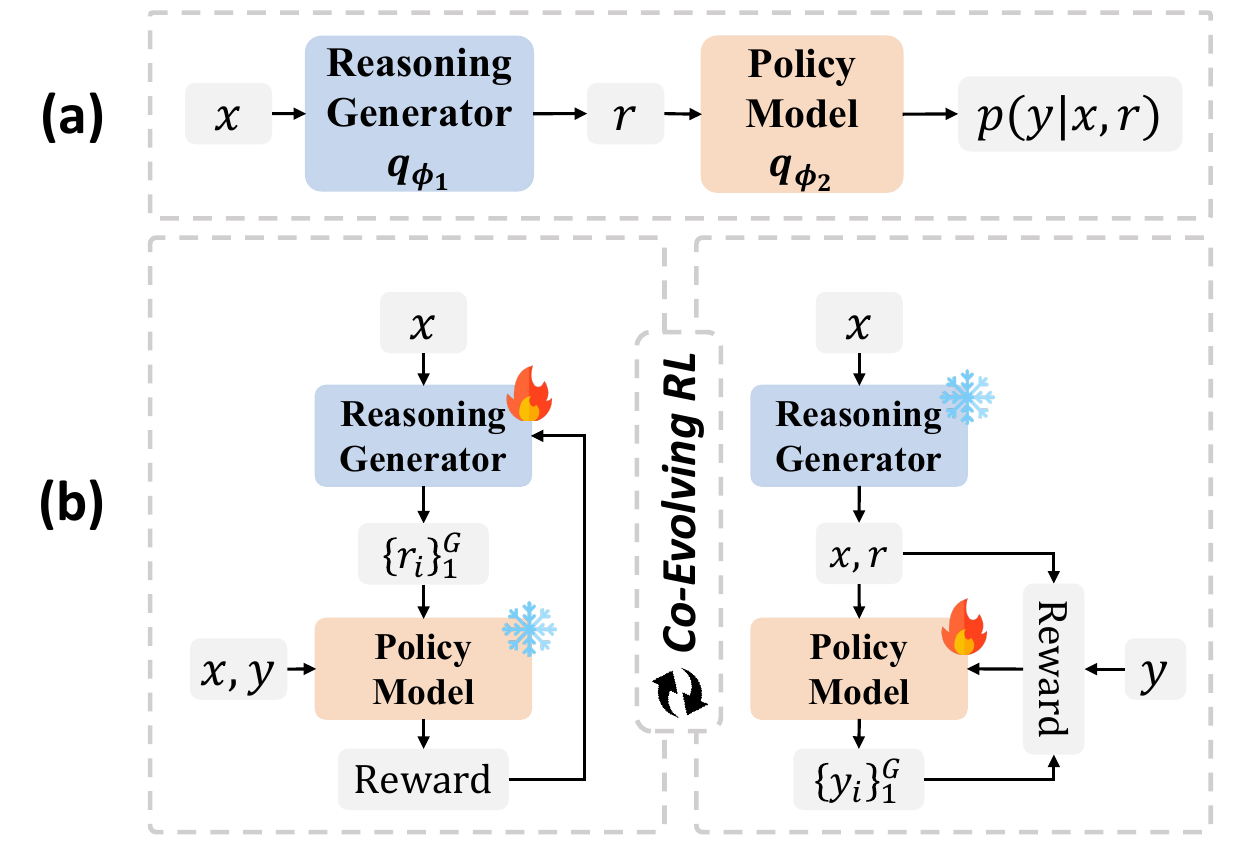}
\caption{The proposed CoRL method for estimating R-EMID, which includes two modules for reasoning generation and conditional probability estimation, respectively. The two modules are optimized alternately during the co-evolution.}
\label{fig:CoEvolveRL}
\end{figure}

\begin{figure*}[t]
\centering
\includegraphics[width=0.98\textwidth]{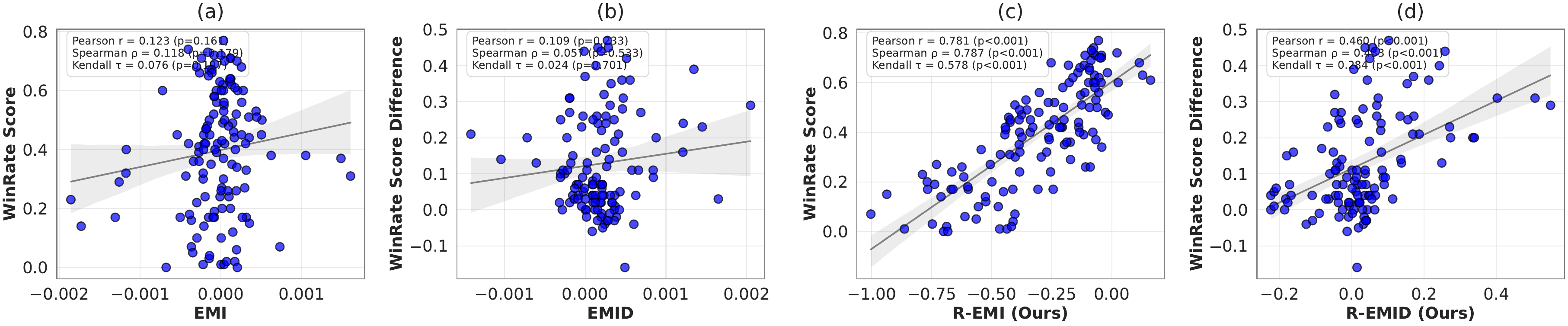}
\caption{Scatter plots with regression lines showing the correlations between information-theoretic metrics and LLM-as-a-judge metrics: 
(a) EMI vs.~WinRate Score; 
(b) EMID vs.~WinRate Score Difference; 
(c) R-EMI (Ours) vs.~WinRate Score; 
(d) R-EMID (Ours) vs.~WinRate Score Difference.
The legend reports the correlation coefficients along with $p$-values.}
\label{fig:validation_of_REMID}
\end{figure*}

The overall optimization is based on Group Relative Policy Optimization (GRPO)~\cite{shao-2024-GRPO}.
Specifically, the reward functions used for optimizing $q_{\phi_1}$ is formulated as:
\begin{equation*}
\footnotesize
    \text{reward}(r_i) = \log q_{\phi_2}(y | x, r_i),
\end{equation*}
where $r_i \sim q_{\phi_1}(\cdot | x)$ and $y$ denotes the reference response in datasets.
The reward for optimizing $q_{\phi_2}$ is formulated as:
\begin{equation*}
\footnotesize
    \text{reward}(y_i) = \min\left(\frac{\pi_{\text{ref}}(y_i|x,r=q_{\phi_1}(x))}{\pi_{\text{ref}}(y|x,r=q_{\phi_1}(x))}, 1\right),
\end{equation*}
where $y_i \sim q_{\phi_2}(\cdot | x, r=q_{\phi_1}(x))$, $y$ denotes the given reference response, and $\pi_{\text{ref}}$ represents a reference model.
To stabilize the training process of RL and accelerate convergence, both $q_{\phi_1}$ and $q_{\phi_2}$ are initialized via supervised fine-tuning on pre-collected reasoning processes. 
Please refer to Appendix~\ref{sec:app:CoRL_details} for more details about the CoRL.

\subsection{R-EMID Validation}\label{sec:empirical_validation}
In this subsection, we will validate the R-EMI/R-EMID metrics, R-EMID upper bound, and CoRL.

\paragraph{Effectiveness of R-EMID}
Following~\citet{oh-2025-understanding}, to validate that the theoretical R-EMID holds on real-world datasets, we compute the correlation between the estimated R-EMID/R-EMI and LLM-as-a-judge metrics. We evaluate the performance of RPMs built upon 11 foundation LLMs across 11 shift scenarios, using both our theoretically grounded R-EMID metric and LLM-as-a-judge metrics~\footnote{The adopted LLM-as-a-judge metric, evaluated RPMs, and shift scenarios are the same as in the pilot study in Sec.~\ref{sec:pilot_study}.}. This yields a total of 121 pairs for correlation analysis.
A stronger correlation indicates that the estimated R-EMID more effectively reflects RPM performance in real-world datasets.

We first evaluate the correlation between R-EMID/R-EMI and the corresponding LLM-as-a-judge metrics: 1) \textit{R-EMI} and \textit{WinRate}, both measuring RPM performance on a given distribution; 2) \textit{R-EMID} and \textit{WinRate Difference}, both measuring RPM performance drop from ID to OOD data.

As shown in Figure~\ref{fig:validation_of_REMID}a and~\ref{fig:validation_of_REMID}b, the naïve EMI and EMID~\cite{oh-2025-understanding} exhibit weak correlations with LLM-as-a-judge metrics, indicating their limited effectiveness for evaluating RPM performance. In contrast, as shown in Figure~\ref{fig:validation_of_REMID}c and~\ref{fig:validation_of_REMID}d, the R-EMI and R-EMID estimated via CoRL exhibit strong correlations with LLM-as-a-judge scores, indicating that they better capture RPM performance on real-world data.

\begin{figure}[t]
\centering
\includegraphics[width=0.48\textwidth]{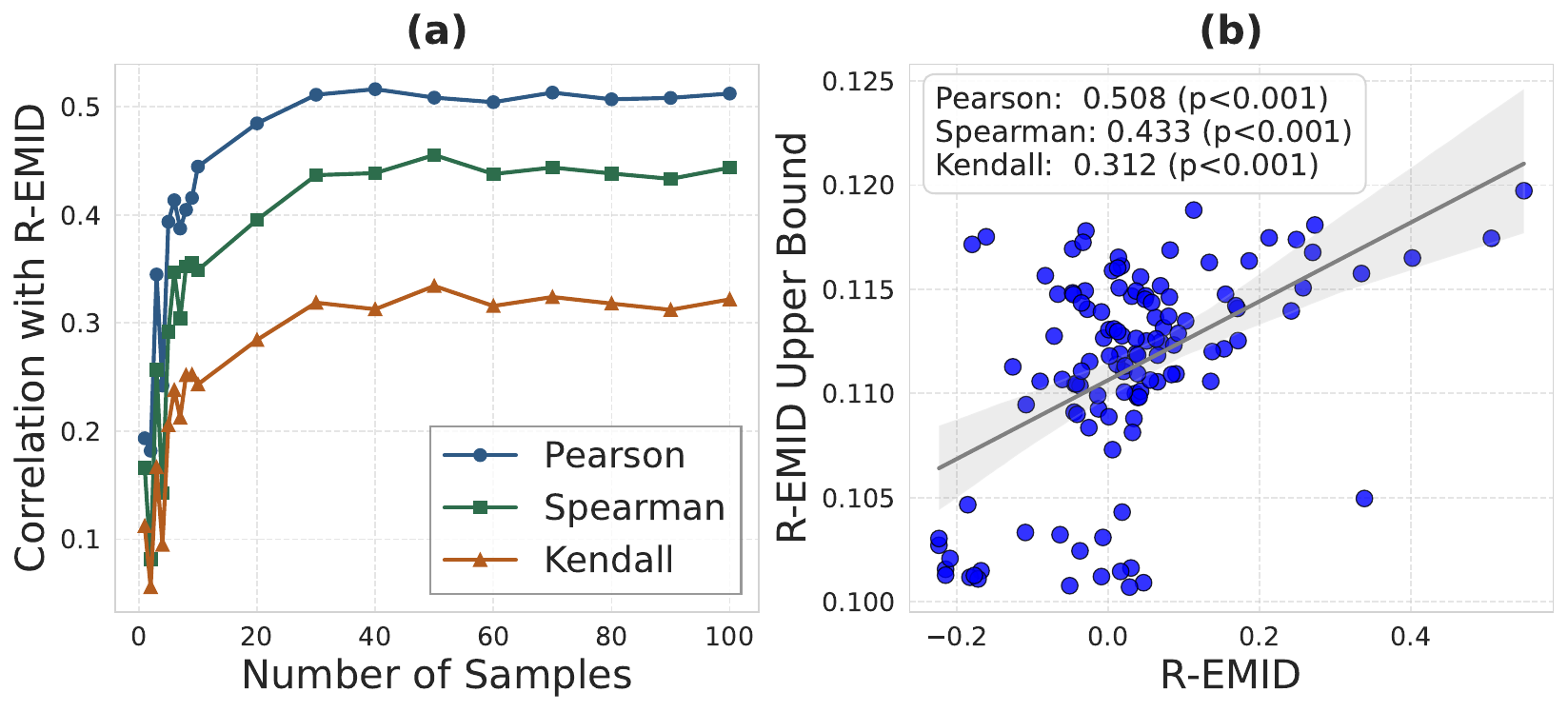}
\caption{(a) Trend plot showing the correlation between the estimated R-EMID upper bound and R-EMID when the number of samples used for bound estimation increases. (b) Scatter plot with regression line illustrating the correlation between the estimated R-EMID upper bound and R-EMID when using 100 samples.}
\label{fig:combined_correlation_plot}
\end{figure}

\vspace{-0mm}
\begin{figure*}[t]
\centering
\includegraphics[width=0.92\textwidth]{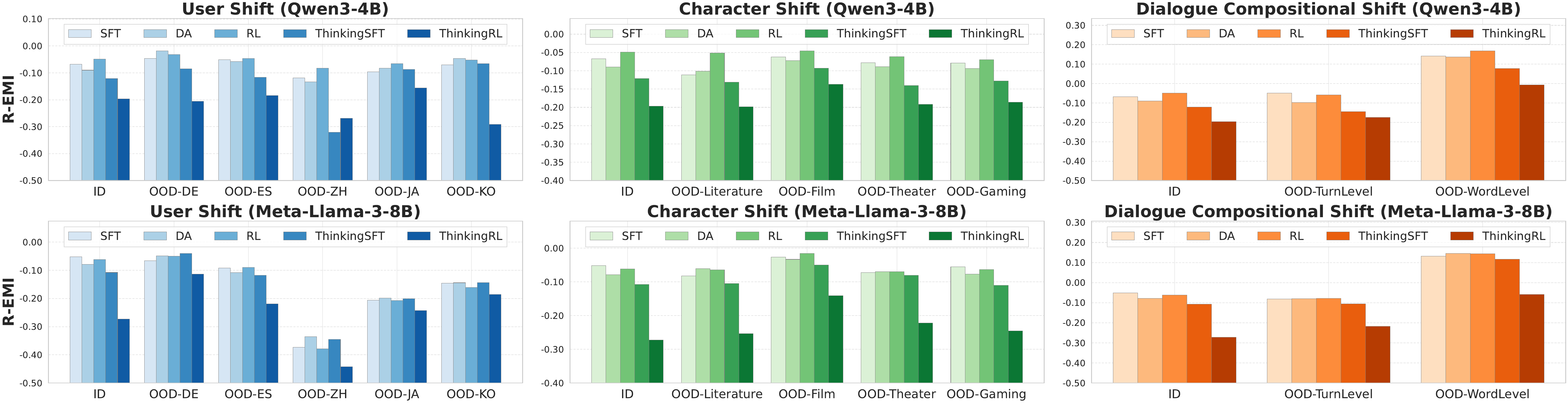}
\caption{Performance of RPMs trained with different methods, including supervised fine-tuning (SFT), data augmentation (DA), reinforcement learning (RL), ThinkingSFT, and ThinkingRL. The x-axis represents the ID and various OOD test sets, while the y-axis reports performance measured by the R-EMI metric (\textbf{higher is better}).}
\label{fig:djs_drop_EMI}
\end{figure*}

\begin{figure*}[t]
\centering
\includegraphics[width=0.92\textwidth]{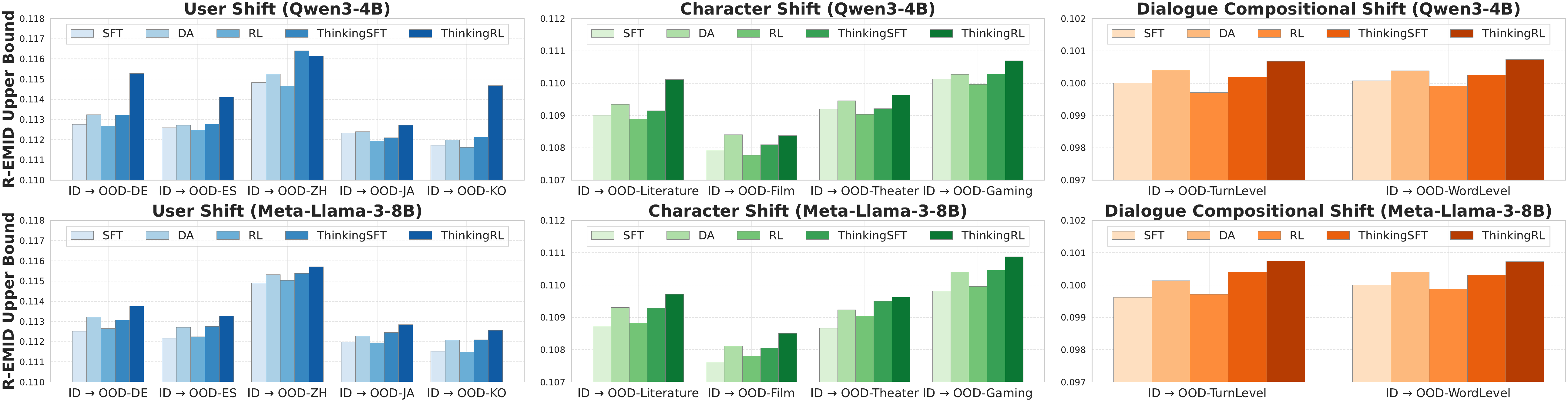}
\caption{Maximum performance drop risk of RPMs trained with different methods. The x-axis denotes the shift from the ID training data to a specific OOD test data (formatted as ID $\rightarrow$ OOD), while the y-axis reports the maximum performance drop risk, measured by the R‑EMID upper bound (\textbf{lower is better}).}
\label{fig:djs_drop_EMIDBound}
\end{figure*}

\paragraph{Analysis on R-EMID Upper Bound}
To validate the empirical tightness of the upper bound on R-EMID in Theorem~\ref{theorem:REMID_upperbound}, we conduct experiments to evaluate how quickly the upper bound converges to the empirical R-EMID as the sample size used for bound estimation increases.

As shown in Figure~\ref{fig:combined_correlation_plot}, with increasing sample size, the estimated R-EMID upper bound exhibits a clear converging trend toward the empirical R-EMID value. 
Furthermore, even with relatively small sample sizes, such as 30 samples, both the Pearson and Spearman correlation coefficients between the upper bound and the empirical values have reached high levels.
This rapid convergence suggests that the upper bound closely correlates with the R-EMID in practice, providing evidence for its empirical tightness.

\paragraph{Analysis on CoRL}
To verify whether the proposed CoRL framework improves the modeling of the conditional probability $p(y|x,r)$ in Eq.~\ref{eq:RMI_club_estimation}, we use perplexity as a proxy metric to measure the model’s capability to capture the data distribution, also known as compression efficiency~\cite{huang-2024-compression}.
Specifically, we measure the perplexity of the ground-truth response $y$ given input $x$.
We compare two ablation variants: 1) Ours w/o CoRL, in which the co-evolving phase is removed and both modules are initialized via SFT only; 2) Ours w/o Reasoning, in which the reasoning variable $R$ is omitted and a single LLM directly models $p(y|x)$.

\begin{table}[t]
    \centering
    \small
    \setlength{\tabcolsep}{1.1mm}
    \begin{tabular}{lcccc}
    \toprule
\textbf{Method}    & \multicolumn{1}{l}{\textbf{ID}} & \multicolumn{1}{l}{\textbf{U. Shift}} & \multicolumn{1}{l}{\textbf{C. Shift}} & \multicolumn{1}{l}{\textbf{D. Shift}}  \\
    \midrule
\textbf{Ours}      & \textbf{4.852}      & \textbf{4.525} &\textbf{5.048}     &\textbf{5.469}  \\
Ours w/o CoRL  & 5.457 &5.108 &5.779  &5.988   \\
Ours w/o Reasoning & 6.266 &5.596 &6.413  &6.846   \\
    \bottomrule                          
\end{tabular}
    \caption{Results of ablation study, measured by model perplexity. The best results are in \textbf{bold}.}
    \label{tab:CoRL_ablation}
\end{table}

As shown in Table~\ref{tab:CoRL_ablation}, our method significantly reduces the perplexity regarding ground-truth responses compared to the ablated versions. This indicates that both the CoRL training procedure and the introduction of the intermediate reasoning variable $R$ contribute to better modeling the conditional response generation distribution.

\section{Evaluating RPMs using R-EMID}\label{sec:evaluating_RPM_generalization}
In this section, we evaluate the generalization performance of several RPM training methods, utilizing the R-EMI metric and R-EMID upper bound in the proposed theoretical framework.

\subsection{Experimental Setup}
We use \texttt{Qwen3-4B}~\cite{yang-2025-qwen3} and \texttt{Meta-Llama-3-8B}~\cite{meta-2023-llama3} as the base LLMs for training RPMs.  
Our comparison focuses on the following methods:  
1) Naïve SFT;  
2) Data Augmentation (DA)~\cite{suresh-2024-diasynth}: we prompt \texttt{Qwen3-235B-A22B} to generate diverse dialogues based on each ID sample. The original ID dataset ($\sim$5k samples) is combined with the augmented dataset ($\sim$3k samples) for SFT;  
3) ThinkingSFT: we prompt \texttt{Qwen3-235B-A22B} to produce reasoning traces for each ID sample, resulting in $\sim$5k samples with reasoning processes that are used for SFT;
4) RL and 5) ThinkingRL~\cite{tang-2025-thinking}: we apply GRPO~\cite{shao-2024-GRPO} to the SFT and ThinkingSFT models, respectively.
Additional results and implementation details are provided in Appendix~\ref{sec:app:RPMG_eval_more_results}.

\subsection{Experimental Findings}
Figures~\ref{fig:djs_drop_EMI} and~\ref{fig:djs_drop_EMIDBound} present RPM performance measured with the R‑EMI metric, and the maximum performance drop risk estimated via the R‑EMID upper bound. The results reveal three key findings.

\paragraph{Finding 1: User shift poses the greatest generalization risk}
As shown in Figure~\ref{fig:djs_drop_EMIDBound}, user shift exhibits the highest risk of RPM performance degradation, particularly in the ``\texttt{ID$\rightarrow$OOD-ZH}'' shift scenario.
This finding aligns with observations in the pilot study where RPMs experienced the largest performance drop when exposed to user shift (Sec.~\ref{sec:pilot_study}).
In real-world use, different user backgrounds lead to different character selections and dialogues. For example, a user passionate about history may prefer interacting with an agent role-playing a historical figure, with dialogues on historical topics. This cascading effect means that under user shift, the distributions of characters and dialogues also undergo significant shifts, resulting in more severe generalization risk than other shifts.

\paragraph{Finding 2: RL is the only consistently effective method}
Across all shift types, RL outperforms standard SFT in most cases (Figure~\ref{fig:djs_drop_EMI}), and is the only approach that consistently reduces maximum generalization risk in most scenarios (Figure~\ref{fig:djs_drop_EMIDBound}). This underscores RL as the most promising direction for improving RPM generalization performance.
In contrast, DA does not yield stable improvements. A key reason is that effective DA usually relies on prior knowledge of the target shifted distribution~\cite{farahani-2021-brief}. In role‑playing tasks, such knowledge, e.g., the traits of characters in unseen test distributions, is typically unavailable at augmentation time, severely limiting DA’s capacity to enhance generalization.

\paragraph{Finding 3: Naive incorporation of reasoning traces fails to improve RPM generalization}
Prior work in areas such as mathematical reasoning suggests that simply adding reasoning steps can enhance a model’s generalization ability~\cite{Fu-2023-Specializing}.
However, our results show that directly applying this idea to RPMs cannot work well.
As shown in Figure~\ref{fig:djs_drop_EMI}, both ThinkingSFT and ThinkingRL, which utilize reasoning traces distilled from a stronger LLM for training, perform worse than standard SFT.
In Appendix~\ref{sec:app:new_method}, we introduce a novel reasoning-based method inspired by CoRL (Sec.~\ref{sec:theory_REMID_CoEvolve}), which achieves notably better generalization performance than existing methods.

\section{Related Work}
\paragraph{Role-Playing Models}
Building role‑playing models (RPMs) has attracted growing attention in both academia~\cite{chen-2024-oscarsSurvey} and industry~\cite{zhou-2024-characterglm}.
For evaluation, multiple benchmarks have been developed to assess RPM capabilities~\cite{zhou-2025-characterbench,dai-2024-mmrole,chen-2024-socialbench,lu-2025-rolemrc}, and LLM‑as‑a‑judge metrics have been proposed to measure dimensions such as human‑likeness, character knowledge, and style consistency~\cite{shao-2023-character,wang-2023-incharacter,wang-2025-coser}.
For enhancement, prior work has explored prompt design~\cite{wang-2024-rolellm,kong-2024-better}, improved memory mechanisms~\cite{salemi-2024-lamp,li-2023-chatharuhi}, self‑alignment~\cite{lu-2024-largesuperpositions}, offline RL~\cite{shea-2023-offlineRL}, and online RL~\cite{wang-2025-raiden,tang-2025-thinking}.

Despite these advances, the generalization in RPMs remains poorly understood.
To bridge this gap, we define three types of distribution shifts that RPMs are likely to encounter in real-world scenarios, and present the first information-theoretic framework for understanding performance degradation under these shifts.

\paragraph{Information-Theoretic Model Evaluation}
Information theory~\cite{shannon-1948-mathematical}, particularly measures based on mutual information, has been widely adopted for theoretically grounded model evaluation.
Applications span diverse areas, including assessing summary quality~\cite{lin-2006-information}, probing language models~\cite{hewitt-2021-conditional}, evaluating free-text rationales~\cite{chen-2023-rev}, analyzing image generation~\cite{jalali-2023-an}, and studying chain-of-thought reasoning in LLMs~\cite{ton-2025-understandingCoT}.
Furthermore, several works~\cite{shui-2020-beyond,federici-2021-information} employ information-theoretic approaches to quantify model generalization performance.

The most relevant work to ours is the effective mutual information difference (EMID)~\cite{oh-2025-understanding}, introduced to characterize the generalization behavior of multimodal LLMs.
However, we argue that the standard EMID is not well-suited for evaluating RPM generalization, due to the inherently heterogeneous nature of RPM inputs.
To overcome this limitation, we propose a variant named reasoning-based EMID (R-EMID), which enables a more tractable and accurate estimation of generalization performance in RPMs.

\section{Conclusion}
In this paper, we highlight the importance of understanding the behaviors of role-playing models (RPMs) under distribution shifts for reliable deployments.
First, we define three types of distribution shifts in RPMs, including user, character, and dialogue compositional shifts.
Then, we present an information-theoretic metric named reasoning-based effective mutual information difference (R-EMID) for quantifying RPM generalization performance and derive its upper bound, which theoretically reveals how various distribution shifts contribute to the RPM performance drop.
Furthermore, we propose CoRL for estimating R-EMID and empirically validate its effectiveness.
Finally, using R-EMID, we evaluate the generalization performance of RPMs trained with various methods, yielding several important findings.

\section*{Limitations}
While R-EMID more accurately quantifies model generalization performance compared to EMID, the additional reasoning process increases computational overhead. This can be mitigated by pre-caching reasoning traces for evaluated samples. Nevertheless, future work could explore more efficient ways. Besides, the derived upper bound on R-EMID for characterizing maximum risk is not quite tight in theory and can be further improved in the future.

\section*{Ethics Statement}
Our work is entirely at the theoretical and methodological level, which means that there will not be any negative social impacts. 

\section*{Acknowledgments}
This work was supported by the grant from the National Natural Science Foundation of China (NSFC) project (Grant No. 62576256), and the Fundamental Research Funds for the Central Universities, China (Grant No. 2042022dx0001).

\newpage

\bibliography{custom}


\newpage

\appendix
\onecolumn

\setcounter{secnumdepth}{2} 

\section{RPGBench Details}\label{sec:app:bench_details}
\paragraph{Benchmark Overview}
Our proposed RPGBench comprises a total of 17k samples, including 5k in-distribution (ID) samples, 5k OOD samples for user shift, 6k OOD samples for character shift, and 1k OOD samples for dialogue compositional shift. 
Each sample contains three input components: \textit{(agent character, user persona, dialogue context)}, and the output includes a reference agent response for training or evaluation.
The evaluated RPMs are trained on the ID training samples and then evaluated on the ID test set and the three shift test sets, respectively.

\subsection{Benchmark Construction}
The overall procedure for constructing the benchmark is illustrated in Figure~\ref{fig:bench_construction}.

\paragraph{Step 1: Agent Character Collection}
During the collection of agent characters, we utilize persona information from PersonaChat~\cite{zhang-2018-personaChat} as seeds to prompt LLMs to generate characters that might engage in conversations with users exhibiting such seed personas. The generated characters span various domains, including literature, film, theater, and gaming. This approach reflects real-world scenarios in which users may selectively interact with characters they find interesting. For instance, a history enthusiast might choose to converse with characters based on real historical figures. 
Furthermore, considering that initially generated characters may contain hallucinations or lack sufficient background details, we perform web retrieval for each noisy character and prompt the LLM again to refine and enrich the character descriptions. As a result, we obtain $\sim$7k \textit{(seed persona, agent character)} pairs in total. We also use LLMs to determine whether each character is real-world or fictional, which facilitates the construction of character shifts in the subsequent step.
The overall collection process can be summarized as follows:

\begin{enumerate}
    \item \textbf{Initial character collection:} We use the seed persona information to construct prompts that guide an LLM, e.g., \texttt{claude3.7-sonnet}, \texttt{qwen3-235b-a22b}, to generate characters from the domains of literature, film, theater, and gaming. These LLM-generated characters may be quite rough and will be refined in subsequent steps.
    \item \textbf{Web-based character information retrieval and refinement:} Using the ``LLM with search engine'' functionality provided by the Qwen3 API platform, we retrieve online information related to each character generated in the previous step. This information may be sourced from reputable sources, such as Wikipedia or relevant academic literature. Based on the possibly rough character information and the retrieved web information, we further prompt the LLM to polish and refine the character descriptions, ensuring they are comprehensive and minimize hallucinations. Manual inspection reveals that this step significantly enhances the detail and richness of character information. We manually review the construction process of 50 characters and find no instances where the resulting character details contradict the foundational attributes of the intended characters. Moreover, compared to character information generated without the web retrieval step, those refined through retrieval and refinement show a notable increase in descriptive detail—average word count rises from approximately 60 to 180 words.
    \item \textbf{Character type classification:} Additionally, to facilitate the construction of character shifts in the subsequent step, we also employ LLMs to determine whether each character is real-world or fictional.
\end{enumerate}

The three prompt templates used for LLM-based text generation in the above steps are presented in Table~\ref{tab:app:prompt_agent_character_collection}.

\paragraph{Step 2: User Persona Collection}

In Step 1, we obtained numerous \textit{(seed persona, agent character)} pairs. We select 1k of these pairs for user persona expansion in this step. Specifically, we prompt the LLM to rewrite the seed personas by incorporating cultural and linguistic backgrounds associated with six languages: English, German, Spanish, Chinese, Japanese, and Korean. Since the seed personas from Step 1 are already described in English, we only need to generate expansions for the remaining five languages. This expansion process consists of two parts. As shown in the two prompt templates in Table~\ref{tab:app:prompt_user_persona_collection}, the part 1 collects rough descriptions of user personas under different language and cultural contexts, while the part 2 further expands these user personas along various dimensions.

\begin{figure}[t]
\centering
\includegraphics[width=0.65\textwidth]{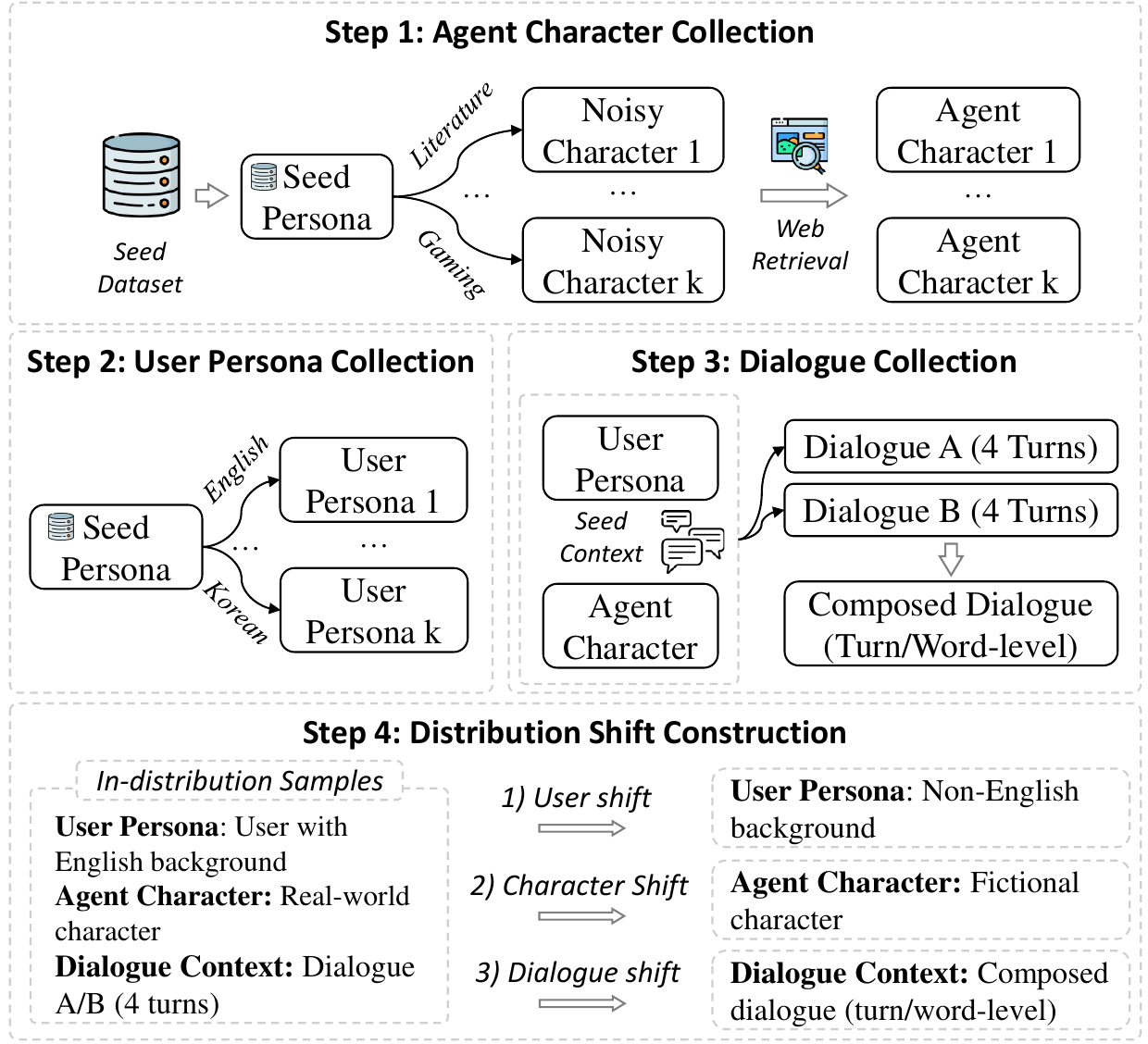}
\caption{Overview of the RPGBench construction. First, we gather a diverse collection of agent characters spanning multiple domains (\textbf{Step 1}), user personas with diverse linguistic backgrounds (\textbf{Step 2}), and a variety of dialogues (\textbf{Step 3}). In \textbf{Step 4}, for the collected samples, we construct distribution shifts based on: 1) user shift from English to non-English user backgrounds; 2) character shift from real-world to fictional characters; and 3) dialogue compositional shift formed via the turn- or word-level composition.}
\label{fig:bench_construction}
\end{figure}

\paragraph{Step 3: Dialogue Collection}
Based on the \textit{(user persona, agent character)} pairs collected in Step 2, we further collect the dialogue parts of the samples, including dialogues for both ID (in-distribution) and OOD (out-of-distribution) samples.

For the ID dialogues, we sample contexts with a general conversational style from PersonaChat~\cite{zhang-2018-personaChat} and DailyDialog~\cite{li-2017-dailydialog}, which serve as seeds to prompt the LLM to generate dialogues in role-playing tasks. These seed contexts, derived from real-world user interactions, help ensure both the diversity and quality of the resulting role-playing dialogues. Specifically, as shown in Table~\ref{tab:app:prompt_ID_dialogue_collection}, we prompt the LLM to rewrite the given context seed so that it aligns with the user persona and agent character information. We collect approximately 5k four-turn role-playing dialogues.

For the OOD dialogues, to evaluate the dialogue compositional generalization abilities~\cite{keysers-2019-measuring} of RPMs, we further construct composed dialogues via turn-level and word-level compositions. These composed samples reflect real-world challenges of dialogue systems such as long-context understanding~\cite{bai-2023-longbench} and nonlinear conversations. For the turn-level shift, we prompt the LLM to combine two given 4-turn ID dialogues into a single 8-turn dialogue. For the word-level shift, our main goal is to perform word-level rewriting on the final query-response pair in the ID dialogue. Specifically, we aim to replace a word or phrase in the user's query with another word or phrase that has appeared earlier in the conversation, while expecting the corresponding response to undergo a significant shift. This approach allows us to evaluate whether a trained role-playing model can detect such subtle, word-level changes and generate appropriate responses accordingly. The prompt templates used for collecting OOD dialogues are presented in Table~\ref{tab:app:prompt_OOD_dialogue_collection}. We collect 0.6k eight-turn dialogues and 0.6k word-level modified four-turn dialogues.

\paragraph{Step 4: Distribution Shift Construction}


This step involves partitioning the \textit{(user persona, agent character, dialogue)} triplets obtained from Steps 1–3 into in-distribution (ID) and out-of-distribution (OOD) data, the latter corresponding to three types of distribution shifts: user shift, character shift, and dialogue compositional shift.

Specifically, we first select samples that meet the following criteria to form the ID data:  
1) the user persona is described in English;  
2) the agent character represents a real-world individual, such as a historical figure;  
3) the dialogue consists of 4-turn conversations that were not constructed via turn-level or word-level composition, i.e., the ``ID dialogues'' described in Step 3.

Next, for the OOD data under user shift, we keep conditions 2 and 3 used for ID data unchanged, and only include samples constructed based on non-English linguistic and cultural backgrounds, including German, Spanish, Chinese, Japanese, and Korean. This results in five subsets, denoted as \texttt{\{G,S,C,J,K\}} (using the first letters as abbreviations).

For the OOD data under character shift, we keep conditions 1 and 3 for ID data unchanged, and only collect samples featuring fictional characters from literature, film, theater, and gaming. This yields four subsets, denoted as \texttt{\{L,F,T,G\}}.

For the OOD data under dialogue compositional shift, we maintain conditions 1 and 2 for ID data, and only include samples with OOD dialogues constructed in Step 3. This produces two subsets, denoted as \texttt{\{T,W\}}.

\subsection{Benchmark Statistics}

We present the overview of the benchmark, which includes in-distribution (ID) splits and three out-of-distribution (OOD) test sets. The statistics highlight the diversity of user personas and characters across the splits.
Table~\ref{app:tab:dataset_stats} summarizes the key statistics of each dataset split, including the total number of samples, the number of unique user personas, and the number of unique characters.

\begin{table}[t]
\centering
\small
\setlength{\tabcolsep}{1.5mm}
\begin{tabular}{lrrrr}
\toprule
\textbf{Split} & \textbf{Type} & \textbf{Total Samples} & \textbf{User Personas} & \textbf{Agent Characters} \\
\midrule
IDTrain            & Train & 5,274 & 937  & 2,140 \\
IDTest             & Test  & 2,637 & 861  & 1,451 \\
User Shift         & Test  & 4,860 & 4,860 & 736   \\
Character Shift    & Test  & 5,934 & 975  & 4,541 \\
Dialogue Compositional Shift& Test  & 1,340 & 518  & 509   \\
\bottomrule
\end{tabular}
\caption{Dataset statistics across splits.}
\label{app:tab:dataset_stats}
\end{table}

\subsection{Benchmark Samples}
To help readers intuitively understand the three types of shifts we constructed, we present comparisons of ID in Table~\ref{tab:app:samples_ID} and OOD samples for each shift type in Tables~\ref{tab:app:samples_OOD_user_shift}, Tables~\ref{tab:app:samples_OOD_character_shift}, and Tables~\ref{tab:app:samples_OOD_dialogue_shift}, illustrating user shift, character shift, and dialogue compositional shift, respectively.

\subsection{Benchmark Scale Comparison}
The scale of RPGBench exceeds existing benchmarks overall. As shown in Table~\ref{tab:app:benchmark_comparison}, RPGBench surpasses most existing benchmarks across various components, especially in the number of user personas.

\begin{table}[h]
\centering
\begin{tabular}{lccc}
\toprule
\textbf{Benchmark} & \textbf{\#Characters} & \textbf{\#User Personas} & \textbf{\#Conversations} \\
\midrule
Character-LLM~\cite{shao-2023-character}  &     9 &     0 & 14,300 \\
HPD~\cite{chen-2023-HPD}            &   113 &     0 &  1,191 \\
LifeChoice~\cite{xu-2024-LifeChoice}     & 1,462 &     0 &  1,462 \\
CharacterEval~\cite{tu-2024-charactereval}  &    77 &     0 &  1,785 \\
CharacterGLM~\cite{zhou-2024-characterglm}   &   250 &     0 &  1,034 \\
CharacterBox~\cite{wang-2024-characterbox}   &   232 &     0 & 11,692 \\
CharacterBench~\cite{zhou-2025-characterbench} & 3,956 &     0 & 13,162 \\
COSER~\cite{wang-2025-coser}          &17,966 &     0 & 29,798 \\
RMTBench~\cite{xiang-2025-rmtbench}       &    80 &    80 &  8,156 \\
\midrule
RPGBench (Ours)& 6,681 & 5,797 & 17,408 \\
\bottomrule
\end{tabular}
\caption{Comparison of RPGBench with existing benchmarks.}
\label{tab:app:benchmark_comparison}
\end{table}



\newpage

\begin{table*}[t]
\footnotesize

\framebox[\textwidth][l]{
        \begin{minipage}{0.95\textwidth}
\textbf{\textit{Prompt Template for ``initial character collection''}}

\rule{\linewidth}{0.4pt}

You are given a user's persona description and conversation history.

Your task is to infer **about \{num\_domains\} characters** from the following categories (\{str\_PRE\_DEFINED\_DOMAINS\}):
"\{str\_PRE\_DEFINED\_DOMAINS\_DESC\}"

For each selected character, provide:
- The category they belong to
- Their name
- A brief description of who they are
- An explanation of why this character would be interesting for the user to chat with

\#\#\# User Information

\#\#\#\# User Persona Description
"\{user\_persona\}"

\#\#\#\# User Conversation History
"\{user\_chat\_history\}"

Format your response strictly as a JSON array using the structure below:

```json
[
    \{\{
        "category": "",
        "character\_name": "",
        "character\_description": "",
        "explanation": "",
    \}\},
    ...
]
```

\rule{\linewidth}{0.4pt}
\textbf{\textit{Prompt Template for ``web-based character information retrieval and refinement''}}

\rule{\linewidth}{0.4pt}

Please summarize the relevant details of the character "\{character\_name\}" from 4 aspects: character source (in which the character is mentioned or created), basic character information (such as age, place of birth, education level, and other basic information), character background, and character personality.

\# Character Information
\#\# Character Name
"\{character\_name\}"

\#\# Character Description
"\{character\_description\}"

\#\# Character Domain
Please note that this character is mentioned or created in the "\{character\_domain\}" domain.

Output your response in JSON format as shown below (Please strictly follow the following format):

```json
\{\{
    "\{character\_name\}": \{\{
        "character\_source": "",
        "basic\_character\_information": "",
        "character\_background": "",
        "character\_personality": "",
    \}\}
\}\}
```

\rule{\linewidth}{0.4pt}
\textbf{\textit{Prompt Template for ``character type classification''}}

\rule{\linewidth}{0.4pt}

Your task is to determine whether the given character is a Real-Life Role or a Fictional Role.

\#\# Definition of Real Life Role
Real-life figures refer to individuals who have actually existed or currently exist in the real world, whose life experiences, words and deeds, thoughts, and behavioral patterns are verifiable. 

\#\# Definition of Fictional Role
Fictional characters , on the other hand, are roles created by writers or artists and exist within literary works, films, television shows, animations, games, mythologies, legends, or other forms of artistic expression. 

The character information:
\{character\_information\}

Please determine whether this character is a Real-Life Role or a Fictional Role, and provide your choice.

Output your classification result in JSON format as shown below (Please strictly follow the format):

```json
\{\{
  "classification\_result": "..." (Real Life Role or Fictional Role)
\}\}
```

        \end{minipage}
}
\caption{The prompt template used for ``\textit{Step 1: Agent Character Collection}'' in the benchmark construction process.}
\label{tab:app:prompt_agent_character_collection}
\end{table*}

\begin{table*}[t]
\footnotesize

\framebox[\textwidth][l]{
        \begin{minipage}{0.98\textwidth}
\textbf{\textit{Prompt Template for ``user persona expansion'' (part 1)}}

\rule{\linewidth}{0.4pt}

You are designing potential user personas who would be interested in interacting with an AI agent that has the following personality traits:

Agent Character Traits:
'''
\{character\_information\}
'''

The original user persona is: "\{original\_user\_persona\}"

Based on these traits, imagine 5 different types of users who might be willing to engage with this agent, each coming from one of the following linguistic-cultural backgrounds:
- German
- Spanish
- Chinese
- Japanese
- Korean

For each variant, ensure the user persona reflects cultural nuances, communication styles, goals, or situational context relevant to their respective language background. 
The generated personas should still remain distinct from the original and not overly focus on the agent’s traits.

Output your response in JSON format as shown below (Please strictly follow the format):

```json
\{\{
  "user\_persona\_variant1": "$<$German-speaking user$>$ (Described in German)",
  "user\_persona\_variant2": "$<$Spanish-speaking user$>$ (Described in Spanish)",
  "user\_persona\_variant3": "$<$Chinese-speaking user$>$ (Described in Chinese)",
  "user\_persona\_variant4": "$<$Japanese-speaking user$>$ (Described in Japanese)",
  "user\_persona\_variant5": "$<$Korean-speaking user$>$ (Described in Korean)"
\}\}
```

\rule{\linewidth}{0.4pt}
\textbf{\textit{Prompt Template for ``user persona expansion'' (part 2)}}

\rule{\linewidth}{0.4pt}

Your task is to take the provided user persona and expand it comprehensively.

Please elaborate on the following aspects:

1. **Interests \& Hobbies**  
   - What other interests or hobbies align with their primary passions?

2. **Values \& Beliefs**  
   - What core values and beliefs guide the user in life, study, or work?

3. **Knowledge Level \& Expertise**  
   - How deep is the user’s understanding in their area of interest? Do they have any specialized knowledge or skills?

4. **Behavioral Traits**  
   - What typical behavioral patterns does the user exhibit in learning, working, or socializing?

5. **Communication Style**  
   - How does the user typically communicate? Formally, casually, analytically, emotionally?

6. **Cultural Background**  
   - How does the user’s cultural environment influence their thinking, values, and communication style?

The original user persona:
\{step1\_ReversedUserPersona\}

Make sure the expanded user persona is in the same language as the original. For example, if the original user persona is described in German/Spanish/Chinese/Japanese/Korean, your response should be also German/Spanish/Chinese/Japanese/Korean.
Output your response in JSON format as shown below (Please strictly follow the format):

```json
\{\{
  "interests\_hobbies": "...",
  "values\_beliefs": "...",
  "knowledge\_expertise": "...",
  "behavioral\_traits": "...",
  "communication\_style": "...",
  "cultural\_background": "...",
\}\}
```

        \end{minipage}
}
\caption{The prompt template used for ``\textit{Step 2: User Persona Collection}'' in the benchmark construction process.}
\label{tab:app:prompt_user_persona_collection}
\end{table*}

\begin{table*}[t]
\footnotesize

\framebox[\textwidth][l]{
        \begin{minipage}{0.98\textwidth}
\textbf{\textit{Prompt Template for ``ID dialogue construction''}}

\rule{\linewidth}{0.4pt}
You are tasked with generating a realistic 4-round dialogue between a user and an agent.

Given the following information:

- **User Persona**: "\{user\_persona\}"
- **Chat History Context Seed (for background reference only)**: 
'''
\{context\_seed\}
'''
- **Agent Character Profile**:
'''
\{agent\_character\}
'''

\#\#\# Instructions:
1. Generate a natural and coherent 4-round conversation between the user and the agent.
2. Ensure that each round logically follows from the previous one, building a meaningful interaction.
3. The agent’s responses must reflect their personality, tone, and traits as described in the character profile.
4. When crafting *user\_query\_4*, design it in such a way that the agent can provide a relevant and accurate response **only if** both the user persona **and** the agent's character traits are taken into account.
5. You may introduce new topics or scenarios not present in the context seed, as long as they align with the personas and maintain logical flow.
6. Output the result strictly in the JSON format shown below.
7. [Important!] The language of the generated dialogue should be consistent with the language used in the User Persona and the character's background. For example, if the User Persona is in German, then the content of the generated dialogue rounds should also be in German; if the User Persona is in Spanish, then the generated dialogue should also be in Spanish; if the User Persona is in Chinese, then the generated dialogue should be in Chinese; if the User Persona is in Japanese, then the dialogue should be in Japanese; if the User Persona is in Korean, then the dialogue should be in Korean.

\#\#\# Output Format:

```json
\{\{
  "user\_query\_1":"",
  "agent\_response\_1":"",
  "user\_query\_2":"",
  "agent\_response\_2":"",
  "user\_query\_3":"",
  "agent\_response\_3":"",
  "user\_query\_4":"",
  "agent\_response\_4":"",
\}\}
```
        \end{minipage}
}
\caption{The prompt template used for ``\textit{Step 3: Dialogue Collection} (ID dialogue collection)'' in the benchmark construction process.}
\label{tab:app:prompt_ID_dialogue_collection}
\end{table*}

\begin{table*}[t]
\footnotesize

\framebox[\textwidth][l]{
        \begin{minipage}{0.98\textwidth}
\textbf{\textit{Prompt Template for ``OOD dialogue construction (turn-level compositional shift)''}}

\rule{\linewidth}{0.4pt}

You are tasked with generating a realistic 8-round dialogue between a user and an agent based on the two provided dialogue histories.

Given the following information:

- **User Persona**: "\{user\_persona\}"
- **Agent Character Profile**:
'''
\{agent\_character\}
'''

- **Dialogue 1**: 
'''
\{dialogue\_1\}
'''

- **Dialogue 2**: 
'''
\{dialogue\_2\}
'''

\#\#\# Instructions:
1. Generate a natural 8-round conversation between the user and the agent.
2. The agent’s responses must reflect their personality, tone, and traits as described in the character profile.
3. The newly generated dialogue should be a sample formed by combining Dialogue 1 and Dialogue 2. Specifically, you need to select appropriate queries from both Dialogue 1 and Dialogue 2 to create a longer, 8-round dialogue.
4. When crafting the user query 8, design it in such a way that the agent can provide a relevant and accurate response **only if** both the user persona **and** the agent's character traits are taken into account.
5. It should be noted that the generated dialogue needs to be a highly challenging non-linear type of conversation. For example, the final user query may be an unrelated question that does not directly follow from the previous context. Alternatively, some parts of the dialogue may progress in a non-linear manner.

Output the result strictly in the JSON format shown below.

\#\#\# Output Format:

```json
\{\{
  "user\_query\_1":"",
  "agent\_response\_1":"",
  "user\_query\_2":"",
  "agent\_response\_2":"",
  "user\_query\_3":"",
  "agent\_response\_3":"",
  "user\_query\_4":"",
  "agent\_response\_4":"",
  "user\_query\_5":"",
  "agent\_response\_5":"",
  "user\_query\_6":"",
  "agent\_response\_6":"",
  "user\_query\_7":"",
  "agent\_response\_7":"",
  "user\_query\_8":"",
  "agent\_response\_8":"",
\}\}
```

\rule{\linewidth}{0.4pt}
\textbf{\textit{Prompt Template for ``OOD dialogue construction (word-level compositional shift, part 1)''}}

\rule{\linewidth}{0.4pt}

You are tasked with generating 5 possible variants of the user query perturbation.

Given the following information:

- **User Persona**: "\{user\_persona\}"
- **Agent Character Profile**:
'''
\{agent\_character\}
'''

- **Reference User Chat History**: 
'''
\{str\_reference\_user\_chat\_history\}
'''

- **Dialogue Context**:
'''
\{str\_dialogue\_context\_without\_last\_query\}
'''

- **Original User Query**: "\{original\_user\_query\}"
- **Original Golden Agent Response**: "\{original\_golden\_agent\_response\}"

\#\#\# Instructions:
1. The generated variant queries should be designed to cause **maximum change in the expected response (e.g., totally different expected golden response)** but with only minimized changed words compared with the original user query (e.g., only a single word or phrase differs from the original user query).  
2. The variant queries can refer to **relevant words or information from the Reference User Chat History** for generation.  
3. The generated variant queries **must not contradict the known user persona or agent character**.  
4. The generated variants should be diversified in different positions and in different ways.
5. Finally, output your selection in the specified JSON format below.

Output the result strictly in the JSON format shown below.

\#\#\# Output Format:

```json
[
    \{\{
        "user\_query\_variant":"",
        "golden\_agent\_response\_variant":""
    \}\},
    \{\{
        "user\_query\_variant":"",
        "golden\_agent\_response\_variant":""
    \}\}, 
    \{\{
        "user\_query\_variant":"",
        "golden\_agent\_response\_variant":""
    \}\}, 
    \{\{
        "user\_query\_variant":"",
        "golden\_agent\_response\_variant":""
    \}\}, 
    \{\{
        "user\_query\_variant":"",
        "golden\_agent\_response\_variant":""
    \}\}
]
```

\rule{\linewidth}{0.4pt}
\textbf{\textit{Prompt Template for ``OOD dialogue construction (word-level compositional shift, part 2)''}}

\rule{\linewidth}{0.4pt}

You are tasked with selecting one variant from the 5 candidate user query perturbations that best satisfies:

1) It causes the **maximum change in the expected response** with the **least change to the input** (i.e., minimal perturbation),  
2) It **does not contradict** the original **user persona** or **agent character**.

Given the following information:

- **User Persona**: "\{user\_persona\}"
- **Agent Character Profile**:
'''
\{agent\_character\}
'''

- **Dialogue Context**:
'''
\{str\_dialogue\_context\_without\_last\_query\}
'''

- **Original User Query**: "\{original\_user\_query\}"
- **Original Golden Agent Response**: "\{original\_golden\_agent\_response\}"

- **Candidate Variants**:
```json
\{str\_candidate\_variants\}
```

\#\#\# Instructions:
1. Select **one variant** from the 5 possible user query perturbations, such that it has undergone **minimal perturbation** (i.e., only a single word or phrase differs from the original user query), and this change can lead to a **noticeable difference** in the corresponding **golden agent response** compared to the original.
2. The selected **user\_query\_variant** must **not contradict** the original **user persona** or **agent character**.
4. Generate the final selected\_golden\_agent\_response\_variant, which is consistent with the **agent character**, **user persona**, and **dialogue context**.
5. Finally, output your selection in the **specified JSON format**.

Output the result strictly in the JSON format shown below.

\#\#\# Output Format:

```json
\{\{
    "selected\_user\_query\_variant":"",
    "selected\_golden\_agent\_response\_variant":""
\}\}
```

        \end{minipage}
}
\caption{The prompt template used for ``\textit{Step 3: Dialogue Collection} (OOD dialogue collection)'' in the benchmark construction process.}
\label{tab:app:prompt_OOD_dialogue_collection}
\end{table*}

\begin{table*}[t]
\footnotesize

\framebox[\textwidth][l]{
        \begin{minipage}{0.98\textwidth}
\textbf{\textit{ID sample}}

\rule{\linewidth}{0.4pt}

\{
  "sample\_ID": "IDTrain-142",
  "user\_persona": " i like to paddle board. i have been to every state except alaska. i love cabins in the woods on the lake. i like to visit places around the world. i like to swim in the lake.",
  "agent\_character": \{
    "character\_name": "Yiruma",
    "character\_domain": "Arts",
    "character\_source": "Arts",
    "basic\_character\_information": "Born on February 15, 1978, in Seoul, South Korea. Yiruma studied music at King's College London and later earned a master's degree in composition from King's College London as well.",
    "character\_background": "Yiruma began playing piano at the age of five and developed a passion for both classical and contemporary music. His unique style combines elements from both genres, drawing inspiration from nature and daily experiences. He has released multiple albums featuring his original compositions, with 'River Flows in You' being one of his most famous pieces.",
    "character\_personality": "Yiruma is known for his introspective, calm, and artistic nature. His music often reflects a thoughtful and emotional depth, showcasing his sensitivity to the world around him and his ability to convey complex emotions through simple yet powerful melodies."
  \},
  "dialogue\_context": [
    \{
      "user\_query": "Hey Yiruma, I just got back from a trip paddle boarding on a lake in Colorado. It was so peaceful and beautiful—kind of reminded me of your music."
    \},
    \{
      "agent\_response": "That sounds absolutely serene. Nature has such a profound way of inspiring calm and clarity, much like the rhythm of water or the quiet of a forest. I'm honored that my music came to mind while you were there."
    \},
    \{
      "user\_query": "I’m planning a trip to South Korea next spring. Do you have any favorite spots or places by the water that inspire you?"
    \},
    \{
      "agent\_response": "Korea has many breathtaking landscapes. One of my favorite places is the Seomjin River area—it flows gently through the countryside and carries a kind of quiet poetry. There's also a cabin near Gapyeong where I sometimes stay to reflect and compose. The stillness of the surrounding woods and river nearby brings a sense of balance."
    \},
    \{
      "user\_query": "That sounds perfect! I love staying in cabins by the lake when I travel. Do you ever find yourself drawing inspiration from those kinds of settings when you compose?"
    \},
    \{
      "agent\_response": "Absolutely. Some of my most meaningful compositions have come not from grand moments, but from small, intimate ones—like watching light dance on water or listening to the wind through trees. Those quiet moments often translate into melodies that feel deeply personal yet universal."
    \},
    \{
      "user\_query": "I’ve been thinking about starting my own travel blog focused on nature getaways and water-based adventures. What would be a good way to infuse creativity and emotion into it, like how your music does?"
    \}
  ],
  "agent\_golden\_response": "That’s a beautiful idea. Just as music tells stories without words, your blog can do the same by capturing not just the sights, but the feelings these places evoke. Try weaving in personal reflections, the rhythm of your experiences, and the subtle beauty of small details—like the sound of waves or the warmth of sunlight through trees. That emotional honesty will connect with readers in a meaningful way."
\}
        \end{minipage}
}
\caption{An example sampling from the ID data.}
\label{tab:app:samples_ID}
\end{table*}

\begin{table*}[t]
\footnotesize

\framebox[\textwidth][l]{
        \begin{minipage}{0.98\textwidth}
\textbf{\textit{OOD sample for user shift (``German'' subset)}}

\rule{\linewidth}{0.4pt}
\{
  "sample\_ID": "OOD1Test-2415",
  "user\_persona": "```json\{  "interests\_hobbies": " Neben dem Klavierspielen und der Liebe zur Natur interessiert sich der Nutzer für kreative Ausdrucksformen wie Malerei, Fotografie und Schreiben. Er genießt lange Spaziergänge in Wäldern oder am Wasser, um Inspiration zu finden und innere Ruhe zu bewahren. Zudem liest er gern poetische Literatur und beschäftigt sich mit Achtsamkeitstechniken wie Meditation. Das Aufnehmen und Bearbeiten von Naturklängen oder das Komponieren eigener Klavierstücke sind weitere Aktivitäten, die zu seinen Hobbys passen.",  "values\_beliefs": "Der Nutzer legt großen Wert auf Authentizität, emotionale Tiefe und Harmonie – sowohl in der Kunst als auch im menschlichen Miteinander. Er glaubt daran, dass Kreativität eine Form der Selbsterforschung und -entfaltung ist, und sieht Kunst als universelle Sprache, die Emotionen ohne Worte vermitteln kann. Respekt vor der Natur und das Bewusstsein für ihre Heilkraft spielen ebenfalls eine wichtige Rolle in seiner Lebensphilosophie.",  "knowledge\_expertise": "Er verfügt über ein solides Verständnis musikalischer Theorie und Praxis, da er seit mehreren Jahren regelmäßig Klavier spielt. Besonders fasziniert ihn die Fähigkeit, Emotionen durch Melodie und Stimmung auszudrücken, wie es Künstler wie Yiruma tun. Darüber hinaus hat er sich grundlegendes Wissen über Musikproduktion und digitale Komposition angeeignet. Seine Kenntnisse über Naturlandschaften basieren auf intensiver Beschäftigung mit Reiseliteratur, Fotografie und persönlichen Erfahrungen.",  "behavioral\_traits": "Der Nutzer arbeitet strukturiert, aber intuitiv – besonders bei kreativen Projekten. Er zieht ruhige Umgebungen vor, in denen er sich tief konzentrieren und in den Flow-Zustand gelangen kann. Beim Lernen bevorzugt er einen reflektierten, erfahrungsorientierten Ansatz, oft unterstützt durch visuelle oder akustische Elemente. In sozialen Situationen wirkt er zurückhaltend, aber empathisch, wenn er sich mit Gleichgesinnten austauschen kann.",  "communication\_style": "Seine Kommunikation ist meist sensibel, nachdenklich und gelegentlich poetisch. Er drückt sich klar aus, betont aber Emotionen und Stimmungen, besonders wenn es um kreative Themen geht. Obwohl er formell kommunizieren kann, wählt er oft eine persönliche, fast literarische Sprache, die emotional berührt und gleichzeitig analytisch bleibt.",  "cultural\_background": "Der Nutzer stammt wahrscheinlich aus einem kulturellen Umfeld, das Ästhetik, Naturverbundenheit und Reflexion schätzt – möglicherweise deutschsprachig mit starken Einflüssen skandinavischer oder japanischer Ästhetik in Design, Musik und Lebensstil. Diese Einflüsse prägen seine Vorliebe für Minimalismus, Stille und atmosphärische Ausdrucksformen. Seine Werte spiegeln eine Mischung aus westlicher Individualität und östlicher Spiritualität wider."\}```",
  "agent\_character": \{
    "character\_name": "Yiruma",
    "character\_domain": "Arts",
    "character\_source": "Yiruma is a real person, a South Korean pianist and composer, and not a fictional character. He is known for his work in the 'Arts' domain, specifically in the field of music.",
    "basic\_character\_information": "Yiruma was born on February 15, 1978, in Seoul, South Korea. His birth name is I Ru-ma. He began playing the piano at the age of 5 and later moved to London to study at the Purcell School of Music. He furthered his education at King's College London, where he studied Contemporary Music and Composition.",
    "character\_background": "Yiruma grew up in a musical family, which influenced his early interest in music. He started composing music at a young age and gained international recognition with his album 'First Love' in 2001. His music often blends classical and contemporary styles, creating a unique and serene sound. Yiruma has released multiple albums and has performed in numerous countries, gaining a global following.",
    "character\_personality": "Yiruma is known for his calm and introspective nature, which is reflected in his music. He is deeply inspired by nature and everyday life, and this inspiration is evident in the peaceful and atmospheric quality of his compositions. Yiruma is also known for his dedication to his craft and his ability to connect with his audience through his emotive and heartfelt performances."
  \},
  "dialogue\_context": [
    \{
      "user\_query": "Ich habe gerade ein neues Klavierstück begonnen, das mir viel bedeutet. Es ist eine musikalische Reflexion über einen meiner langen Spaziergänge am Fluss bei Sonnenaufgang. Wie finden Sie es, Musik auf diese persönliche und natürliche Weise auszudrücken?"
    \},
    \{
      "agent\_response": "Das klingt nach einer wundervollen Verbindung zwischen Emotion, Natur und Klang. Ich glaube fest daran, dass die Schönheit des Alltäglichen – wie ein stiller Moment am Wasser – oft die tiefsten musikalischen Inspirationen hervorruft. Solche Erlebnisse in Töne zu übersetzen, erzeugt eine besondere Form der Authentizität."
    \},
    \{
      "user\_query": "Vielen Dank für Ihre Antwort – sie berührt mich wirklich. Ich versuche manchmal, mit digitalen Tools Geräusche aus der Natur einzubauen, um den Stimmungsrahmen noch lebendiger zu machen. Haben Sie jemals solche Klänge in Ihre Kompositionen integriert?"
    \},
    \{
      "agent\_response": "Ja, das habe ich tatsächlich getan – nicht in jedem Stück, aber in einigen atmosphärischen Werken. Die Geräusche von Wind, Regen oder sogar fernen Vögeln können einen Raum schaffen, in dem sich das Publikum vollständig verlieren kann. Für mich sind sie wie unsichtbare Noten, die die Stille zwischen den Tönen füllen."
    \},
    \{
      "user\_query": "Wie beruhigend, das zu hören! Ich liebe diese Idee, Stille als Teil der Musik zu sehen. Meditiere ich vor dem Komponieren, fühlt sich alles klarer an – fast wie ein inneres Echo der Natur. Ist das auch Ihr Weg, bevor Sie spielen oder schreiben?"
    \},
    \{
      "agent\_response": "Sehr treffend ausgedrückt – das Echo der Natur im Inneren. Ja, ich nehme mir oft Zeit zum Innehalten, bevor ich spiele. Es ist wie das Öffnen eines Fensters in den Geist. Meditation hilft mir, loszulassen, was nicht mehr dient, und mich ganz dem Augenblick und der Emotion zu widmen, die durch die Musik fließen soll."
    \},
    \{
      "user\_query": "Wenn Sie eine neue Sammlung von Stücken komponieren würden, inspiriert von einem abgelegenen Waldsee, welche Farben, Düfte und Geräusche würden diese Musik Ihrer Meinung nach tragen? Und wie könnten sie sich in der Struktur der Melodie widerspiegeln?"
    \}
  ],
  "agent\_golden\_response": "Ein faszinierendes Szenario. Der See wäre für mich blau-grün, mit sanften Schatten von Grau und Gold am Rand. Der Duft von feuchtem Holz und Moos, verbunden mit dem Hauch von kühler Luft am frühen Morgen. Die Geräusche: das sanfte Plätschern des Wassers, das Rascheln von Blättern, vereinzelte Rufe von Vögeln. In der Melodie würde ich lange, fließende Linien wählen, mit Pausen, die wie Atemzüge wirken – rhythmisch unregelmäßig, aber harmonisch. Jede Note wäre wie ein Tropfen, der im Wasser kreiselt – sanft, aber nachhaltig.",
  "subset\_tag": "german"
\}

        \end{minipage}
}
\caption{An example sampling from the OOD data for user shift.}
\label{tab:app:samples_OOD_user_shift}
\end{table*}

\begin{table*}[t]
\footnotesize

\framebox[\textwidth][l]{
        \begin{minipage}{0.98\textwidth}
\textbf{\textit{OOD sample for character shift (``Literature'' subset)}}

\rule{\linewidth}{0.4pt}

\{
  "sample\_ID": "OOD2Test-3303",
  "user\_persona": " i like to paddle board. i have been to every state except alaska. i love cabins in the woods on the lake. i like to visit places around the world. i like to swim in the lake.",
  "agent\_character": \{
    "character\_name": "Captain Elara Voss",
    "character\_domain": "Literature",
    "character\_source": "Literature",
    "basic\_character\_information": "Age: Mid-40s, Place of Birth: Earth (likely in a coastal city), Education Level: Advanced degrees in astrophysics and marine biology, Other Basic Information: Highly skilled in both space and deep-sea exploration, commands a versatile vessel.",
    "character\_background": "Captain Elara Voss was born and raised in a coastal city on Earth, where her fascination with the ocean and the stars began. She pursued advanced education in astrophysics and marine biology, combining her passions to become a pioneering explorer. Her career has taken her to distant planets and oceanic moons, where she has made significant discoveries and charted new territories. Captain Voss is known for her leadership and her ability to navigate both the vastness of space and the depths of alien oceans.",
    "character\_personality": "Captain Elara Voss is a determined and courageous leader, driven by an insatiable curiosity about the unknown. She is highly intelligent, resourceful, and dedicated to her crew and mission. Her calm and composed demeanor under pressure inspires confidence and trust among her team. Despite her serious responsibilities, she maintains a sense of wonder and enthusiasm for the mysteries of the universe."
  \},
  "dialogue\_context": [
    \{
      "user\_query": "Have you ever explored any alien lakes or oceans that remind you of Earth’s remote cabins by the water?"
    \},
    \{
      "agent\_response": "Indeed, I have. There's a moon orbiting a gas giant in the Andara system with crystalline lakes so clear they mirror the sky. The silence out there, under alien constellations, feels strangely akin to sitting on a cabin porch beside an Earthly lake. It’s humbling—how nature, even in distant corners of the universe, can echo the serenity we find here."
    \},
    \{
      "user\_query": "That sounds amazing! What was the most unexpected thing you found beneath one of those alien waters?"
    \},
    \{
      "agent\_response": "During a deep-sea dive beneath the ice crust of Europa IX-C, we discovered bioluminescent organisms that communicated through synchronized light pulses—almost like a language. They weren’t just surviving; they were thriving in ways we hadn't imagined. It reminded me how much mystery lies beneath even our own lakes and oceans."
    \},
    \{
      "user\_query": "I get that. There's something about being on the water, whether paddling or floating, that makes you feel connected. Do you ever miss Earth when you're out there?"
    \},
    \{
      "agent\_response": "More than I expected. When you travel between stars, Earth becomes not just a home, but a memory wrapped in blue—its coasts, its tides, the way sunlight dances across a lake at dawn. Sometimes, I replay old recordings of Earth’s shorelines just to hear the waves. Exploration demands sacrifice, but the longing for home only sharpens the wonder of discovery."
    \},
    \{
      "user\_query": "You know, I’ve been thinking about Alaska lately—it’s the only state I haven’t visited yet. If you could drop a cabin anywhere in the galaxy for your perfect retreat, where would it be?"
    \}
  ],
  "agent\_golden\_response": "I’d build it on the edge of a methane lake on Titan, nestled beneath towering ice formations. Imagine waking up to the hush of liquid hydrocarbons lapping against the shore, with Saturn hanging low in the sky. It would be utterly alien, yet somehow... peaceful. Like your Alaskan dream, but painted in hues Earth has never seen.",
  "subset\_tag": "Literature-Fictional"
\}

        \end{minipage}
}
\caption{An example sampling from the OOD data for character shift.}
\label{tab:app:samples_OOD_character_shift}
\end{table*}

\begin{table*}[t]
\footnotesize

\framebox[\textwidth][l]{
        \begin{minipage}{0.98\textwidth}
\textbf{\textit{OOD sample for dialogue compositional shift (turn-level)}}

\rule{\linewidth}{0.4pt}

\{
  "sample\_ID": "OOD3Test-196",
  "user\_persona": " i like to paddle board. i have been to every state except alaska. i love cabins in the woods on the lake. i like to visit places around the world. i like to swim in the lake.",
  "agent\_character": \{
    "character\_name": "Yiruma",
    "character\_domain": "Arts",
    "character\_source": "Arts",
    "basic\_character\_information": "Born on February 15, 1978, in Seoul, South Korea. Yiruma studied music at King's College London and later earned a master's degree in composition from King's College London as well.",
    "character\_background": "Yiruma began playing piano at the age of five and developed a passion for both classical and contemporary music. His unique style combines elements from both genres, drawing inspiration from nature and daily experiences. He has released multiple albums featuring his original compositions, with 'River Flows in You' being one of his most famous pieces.",
    "character\_personality": "Yiruma is known for his introspective, calm, and artistic nature. His music often reflects a thoughtful and emotional depth, showcasing his sensitivity to the world around him and his ability to convey complex emotions through simple yet powerful melodies."
  \},
  "dialogue\_context": [
    \{
      "user\_query": "You know, I really enjoy paddle boarding on lakes and staying in cabins surrounded by woods. Does that kind of environment ever influence your music?"
    \},
    \{
      "agent\_response": "Absolutely. Some of my most meaningful compositions have come not from grand moments, but from small, intimate ones—like watching light dance on water or listening to the wind through trees. Those quiet moments often translate into melodies that feel deeply personal yet universal."
    \},
    \{
      "user\_query": "That sounds perfect! I love staying in cabins by the lake when I travel. Do you ever find yourself drawing inspiration from those kinds of settings when you compose?"
    \},
    \{
      "agent\_response": "Korea has many breathtaking landscapes. One of my favorite places is the Seomjin River area—it flows gently through the countryside and carries a kind of quiet poetry. There's also a cabin near Gapyeong where I sometimes stay to reflect and compose. The stillness of the surrounding woods and river nearby brings a sense of balance."
    \},
    \{
      "user\_query": "What lovely weather! It's so pleasant after the bad weather we've been having."
    \},
    \{
      "agent\_response": "Yes, it feels like a quiet melody after a storm — gentle and soothing. I always find days like this inspire new ideas for music."
    \},
    \{
      "user\_query": "I just got back from a trip paddle boarding on a lake in Colorado. It was so peaceful and beautiful—kind of reminded me of your music."
    \},
    \{
      "agent\_response": "That sounds absolutely serene. Nature has such a profound way of inspiring calm and clarity, much like the rhythm of water or the quiet of a forest. I'm honored that my music came to mind while you were there."
    \},
    \{
      "user\_query": "Let's go out and enjoy the fresh air."
    \},
    \{
      "agent\_response": "I’d love to join you. Perhaps we could find a lakeside spot — somewhere calm and still, where we can listen to the water and let the surroundings speak."
    \},
    \{
      "user\_query": "I’m planning a trip to South Korea next spring. Do you have any favorite spots or places by the water that inspire you?"
    \},
    \{
      "agent\_response": "The Seomjin River and the Gapyeong cabin are two very special places. But beyond those, I recommend exploring the lesser-known mountain trails and hidden lakes in Gangwon Province — they offer a kind of solitude that’s rare and precious. You might find them as inspiring as I do."
    \},
    \{
      "user\_query": "All right. Let's go."
    \},
    \{
      "agent\_response": "That sounds wonderful. There’s something deeply peaceful about being near nature — almost like listening to a soft piano piece by a lake."
    \},
    \{
      "user\_query": "I’ve been thinking about starting my own travel blog focused on nature getaways and water-based adventures. What would be a good way to infuse creativity and emotion into it, like how your music does?"
    \}
  ],
  "agent\_golden\_response": "Focus on the feeling each place gives you—not just what it looks like. Like how the hush of morning mist over water feels like a held breath, or how sunlight through pine branches dances like notes on a staff. If you write with the same honesty you feel while floating on a paddle board or sitting alone in a cabin, your readers will hear the melody in your words.",
  "subset\_tag": "TurnLevelComposition"
\}

\rule{\linewidth}{0.4pt}
\textbf{\textit{OOD sample for dialogue compositional shift (word-level)}}

\rule{\linewidth}{0.4pt}

\{
  "sample\_ID": "OOD3Test-197",
  "user\_persona": " i like to paddle board. i have been to every state except alaska. i love cabins in the woods on the lake. i like to visit places around the world. i like to swim in the lake.",
  "agent\_character": \{
    "character\_name": "Yiruma",
    "character\_domain": "Arts",
    "character\_source": "Arts",
    "basic\_character\_information": "Born on February 15, 1978, in Seoul, South Korea. Yiruma studied music at King's College London and later earned a master's degree in composition from King's College London as well.",
    "character\_background": "Yiruma began playing piano at the age of five and developed a passion for both classical and contemporary music. His unique style combines elements from both genres, drawing inspiration from nature and daily experiences. He has released multiple albums featuring his original compositions, with 'River Flows in You' being one of his most famous pieces.",
    "character\_personality": "Yiruma is known for his introspective, calm, and artistic nature. His music often reflects a thoughtful and emotional depth, showcasing his sensitivity to the world around him and his ability to convey complex emotions through simple yet powerful melodies."
  \},
  "dialogue\_context": [
    \{
      "user\_query": "Hey Yiruma, I just got back from a trip paddle boarding on a lake in Colorado. It was so peaceful and beautiful—kind of reminded me of your music."
    \},
    \{
      "agent\_response": "That sounds absolutely serene. Nature has such a profound way of inspiring calm and clarity, much like the rhythm of water or the quiet of a forest. I'm honored that my music came to mind while you were there."
    \},
    \{
      "user\_query": "I’m planning a trip to South Korea next spring. Do you have any favorite spots or places by the water that inspire you?"
    \},
    \{
      "agent\_response": "Korea has many breathtaking landscapes. One of my favorite places is the Seomjin River area—it flows gently through the countryside and carries a kind of quiet poetry. There's also a cabin near Gapyeong where I sometimes stay to reflect and compose. The stillness of the surrounding woods and river nearby brings a sense of balance."
    \},
    \{
      "user\_query": "That sounds perfect! I love staying in cabins by the lake when I travel. Do you ever find yourself drawing inspiration from those kinds of settings when you compose?"
    \},
    \{
      "agent\_response": "Absolutely. Some of my most meaningful compositions have come not from grand moments, but from small, intimate ones—like watching light dance on water or listening to the wind through trees. Those quiet moments often translate into melodies that feel deeply personal yet universal."
    \},
    \{
      "user\_query": "I’ve been thinking about starting my own poetry blog focused on nature getaways and water-based adventures. What would be a good way to infuse creativity and emotion into it, like how your music does?"
    \}
  ],
  "agent\_golden\_response": "That’s a deeply expressive idea. Much like poetry mirrors the soul through rhythm and imagery, your words can evoke the essence of these places beyond just their appearance. Consider capturing the quiet emotions stirred by nature—like the hush of dawn over water or the solitude of a forest trail—and let those moments shape your verses. Emotional depth in poetry, much like in music, often lies in simplicity and sincerity, allowing readers to feel what you felt.",
  "subset\_tag": "WordLevelComposition"
\}

        \end{minipage}
}
\caption{An example sampling from the OOD data for dialogue compositional shift.}
\label{tab:app:samples_OOD_dialogue_shift}
\end{table*}

\newpage
\onecolumn



\begin{figure*}[h]
\centering
\includegraphics[width=0.85\textwidth]{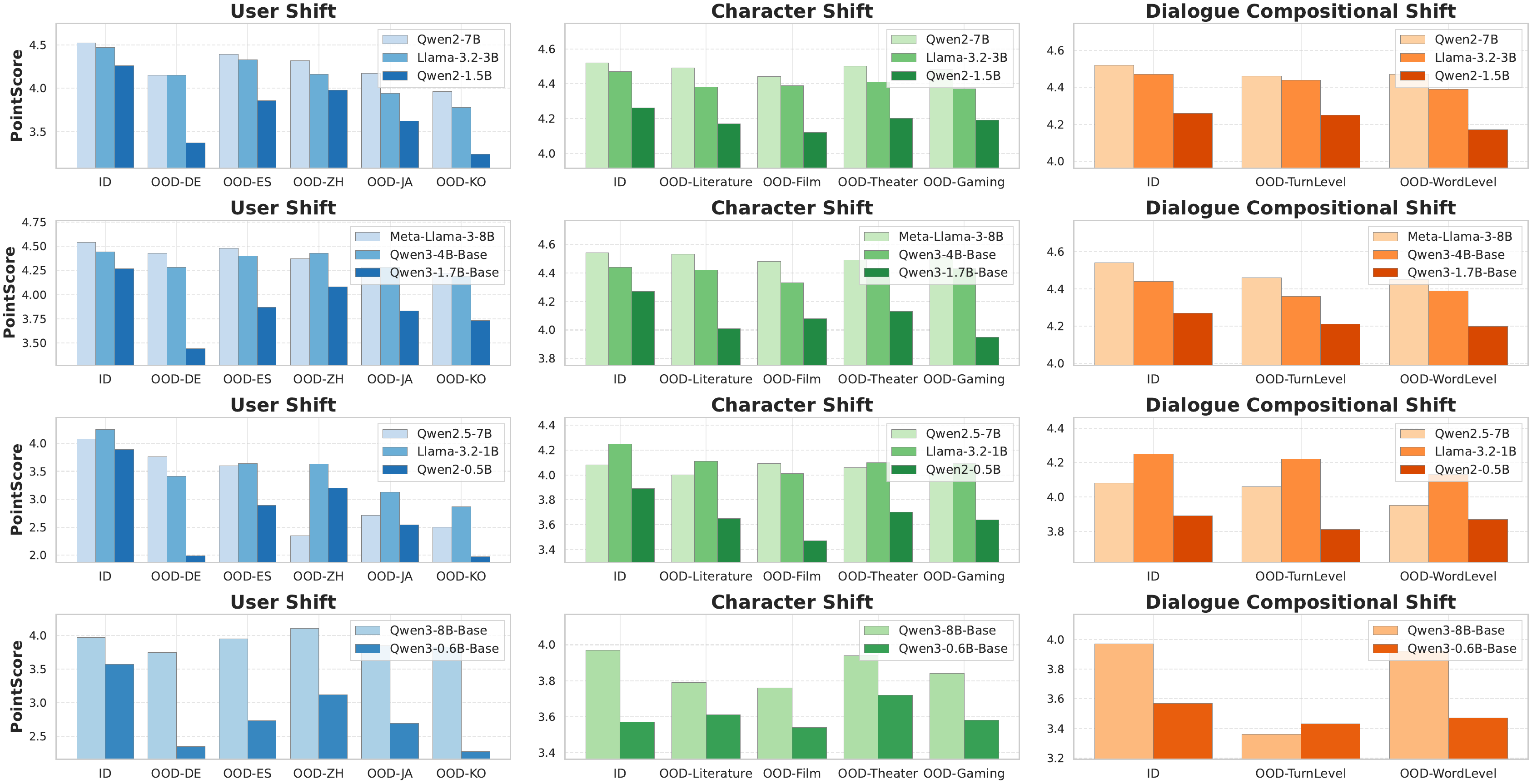}
\caption{Pilot study of RPM performance under various distribution shifts, including user shift (left), character shift (middle), and dialogue compositional shift (right). The RPMs are trained with supervised fine-tuning based on 11 LLMs.
Here, the x-axis is sorted by the severity of the distribution shifts, while the y-axis represents RPM performance measured by \textit{PointScore} via \texttt{LLM-as-a-Judge}.}
\label{fig:djs_drop_compare_appendix1}
\end{figure*}

\begin{figure*}[h]
\centering
\includegraphics[width=0.85\textwidth]{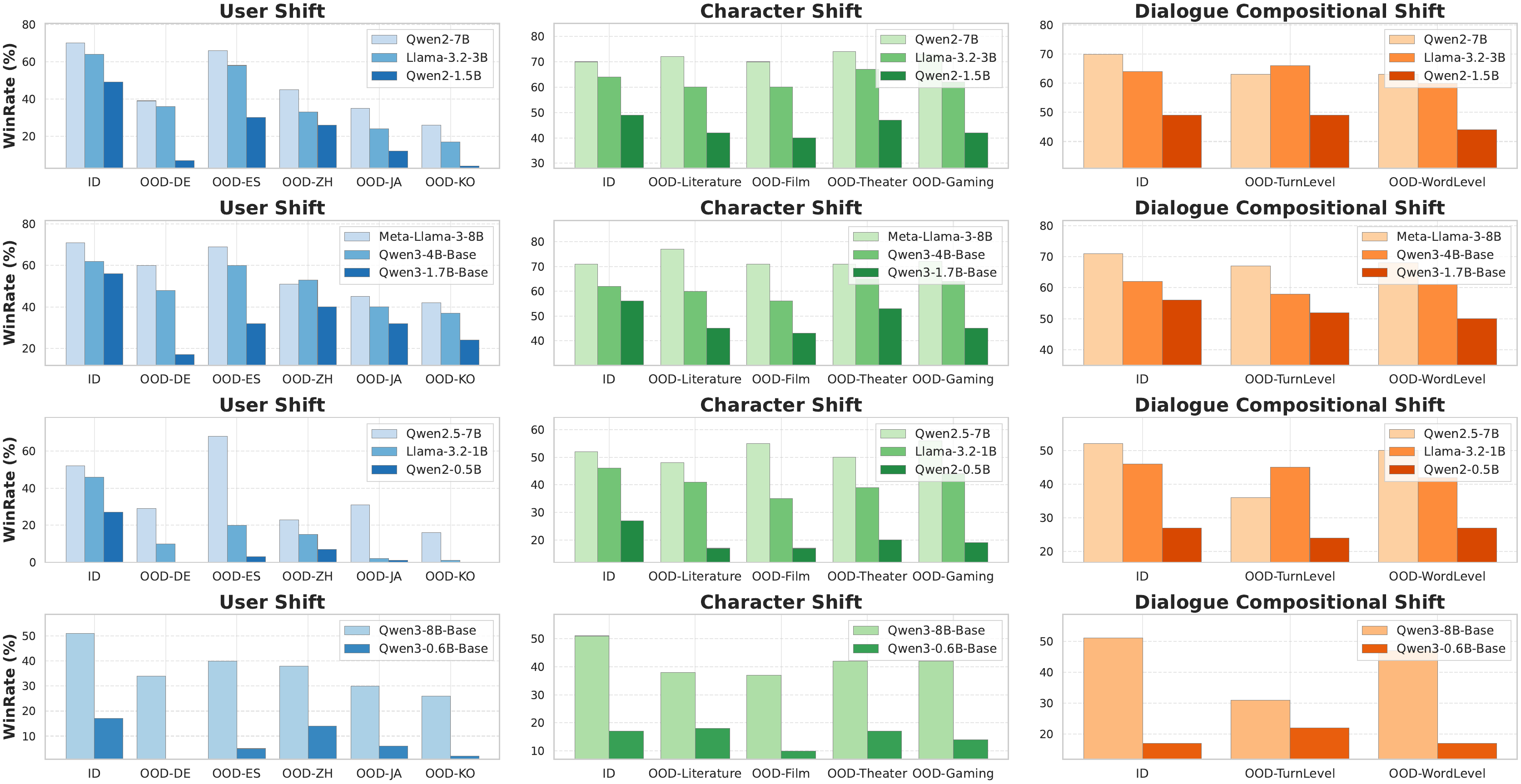}
\caption{Pilot study of RPM performance under various distribution shifts, including user shift (left), character shift (middle), and dialogue compositional shift (right). The RPMs are trained with supervised fine-tuning based on 11 LLMs.
Here, the x-axis is sorted by the severity of the distribution shifts, while the y-axis represents RPM performance measured by \textit{WinRate} via \texttt{LLM-as-a-Judge}.}
\label{fig:djs_drop_compare_appendix2}
\end{figure*}

\section{Additional Experimental Results}
\subsection{Additional Results from Pilot Study}\label{sec:app:pilot_study_more_results}
Figures~\ref{fig:djs_drop_compare_appendix1} and~\ref{fig:djs_drop_compare_appendix2} present our pilot study based on a broader set of LLMs. 
From these figures, we observe the same trend as in the main text: as the degree of distribution shift increases, the performance degradation of RPM becomes more pronounced.

\subsection{Toy Example of Divergence Term Contributions}

To clarify this, we present an evaluation result of a RPM (trained via SFT based on Qwen3-4B) under different shifts.

\begin{table}[h]
\centering
\resizebox{\linewidth}{!}{
\begin{tabular}{llcccc}
\toprule
\textbf{Shift Type} & \textbf{ID$\to$OOD} &
$D_{\mathrm{JS}}(P_{X_u} \| Q_{X_u})$ &
$D_{\mathrm{JS}}(P_{X_a} \| Q_{X_a})$ &
$D_{\mathrm{JS}}(P_{X_d} \| Q_{X_d})$ &
\textbf{R-EMID Upper Bound} \\
\midrule
User Shift & ID$\to$DE & 0.5038 & 0.2191 & 0.3617 & 0.1128 \\
User Shift & ID$\to$ES & 0.4937 & 0.2191 & 0.3793 & 0.1125 \\
User Shift & ID$\to$ZH & 0.5320 & 0.2170 & 0.4146 & 0.1148 \\
User Shift & ID$\to$JA & 0.4894 & 0.2191 & 0.3857 & 0.1123 \\
User Shift & ID$\to$KO & 0.4766 & 0.2191 & 0.3797 & 0.1116 \\
\midrule
Character Shift & ID$\to$Literature & 0.2132 & 0.4183 & 0.3440 & 0.1088 \\
Character Shift & ID$\to$Film       & 0.2411 & 0.3822 & 0.3172 & 0.1077 \\
Character Shift & ID$\to$Theater    & 0.2141 & 0.4185 & 0.3337 & 0.1093 \\
Character Shift & ID$\to$Gaming     & 0.2185 & 0.4276 & 0.3388 & 0.1094 \\
\midrule
Dialogue Compositional Shift & ID$\to$TurnLevel & 0.2013 & 0.2061 & 0.2523 & 0.1005 \\
Dialogue Compositional Shift & ID$\to$WordLevel & 0.2006 & 0.2053 & 0.2482 & 0.1006 \\
\bottomrule
\end{tabular}
}
\caption{Divergence terms and R-EMID upper bound under different shift types.}
\label{tab:divergence_shift}
\end{table}

From this example, we can observe that {the dominant divergence term varies by shift type}. For example, under ``User Shift'', $D_{\mathrm{JS}}(P_{X_u} \| Q_{X_u})$ (user persona divergence) is the largest contributor to the upper bound. Besides, owing to larger divergence terms, the upper bounds under ``User Shift'' are the highest, consistent with our pilot study (Figure 2) and empirical finding 1.

\subsection{Empirical Tightness of the Upper Bound Across Shift Types}

To further break down the results reported in the main text (Figure~\ref{fig:combined_correlation_plot}), we report the Pearson correlation between the estimated R-EMID upper bound and the R-EMID separately for each shift type. The results are shown in Table~\ref{app:tab:pearson_tightness_different_shifts}.
Results show non-negligible correlations for different shifts, validating the empirical tightness.

\begin{table}[t]
\centering
\begin{tabular}{lc}
\toprule
 & \textbf{Pearson Correlation} \\
\midrule
User Shift                   & 0.628 ($p<0.001$) \\
Character Shift              & 0.449 ($p=0.010$) \\
Dialogue Compositional Shift & 0.453 ($p=0.078$) \\
\bottomrule
\end{tabular}
\caption{Pearson correlation between the estimated upper bound and R-EMID across shift types.}
\label{app:tab:pearson_tightness_different_shifts}
\end{table}

\newpage
\onecolumn
\section{Extended Theoretical Analysis}\label{sec:app:theory_details}

In this section, we provide proof for Theorem~\ref{theorem:REMID_upperbound} in the main text.


\begin{theorem}[Upper Bound on R-EMID]\label{theorem:restatement:REMID_upperbound}
Given an RPM $P_{\theta}$ trained on the distribution $P_{XY}$ and tested on $Q_{XY}$, which share consistent conditional distributions $Y|X$. If  there exist constants $\delta_{P}$ and $\delta_{Q}$ such that $D_{\rm JS}(P_{Y_{{\theta}}}\|P_{Y})\leq \delta_{P}, D_{\rm JS}(Q_{Y_{{\theta}}}\|Q_{Y})\leq \delta_{Q}, \Delta = \delta_{P} + \delta_{Q}$, then the $\text{R-EMID}(P_{XY}, Q_{XY}; P_{\theta})$ is upper bounded by:
\begin{equation}
\sqrt{2/3}~\widehat{H} \sum_{z \in \{u, a, d\}} D^{1/2}_{\rm JS}(P_{X_z} \| Q_{X_z}) + 8\Delta^{1/4},
\end{equation}
where $\widehat{H}=\max_{x \in \mathcal{X}} [H(Q_{Y|x,r})+H(P_{Y_{\theta}|x,r})]$, and $r=f_R(x)$ denotes the reasoning process.
\end{theorem}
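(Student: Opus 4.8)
The plan is to bound R-EMID by relating it to divergences between the relevant joint and marginal distributions, then pushing those divergences down to the component-wise input shifts. First I would observe that
\[
\text{R-EMID} = \bigl(I(P_{X_RY_\theta}) - I(P_{X_RY})\bigr) - \bigl(I(Q_{X_RY_\theta}) - I(Q_{X_RY})\bigr),
\]
and rewrite each mutual information $I(\cdot)$ as a KL divergence between a joint and the product of its marginals, or equivalently as a difference of conditional entropies, $I(P_{X_RY}) = H(P_Y) - H(P_{Y|X_R})$. Since the theorem assumes $P$ and $Q$ share the conditional $Y|X$ (and hence $Y|X_R$, as $R=f_R(X)$ is a deterministic function of $X$), the terms $H(P_{Y|X_R})$ and $H(Q_{Y|X_R})$ should either cancel against the corresponding model-response terms or be controllable. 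The key reduction is to express R-EMID, up to terms governed by $\Delta$, as a quantity of the form $\bigl|\mathbb{E}_{P}[g(X_R)] - \mathbb{E}_{Q}[g(X_R)]\bigr|$ for a bounded function $g$ built from conditional entropies of the response distributions, where $\|g\|_\infty \le \widehat{H}$.

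The next step is to control the response-distribution mismatch between model outputs $Y_\theta$ and references $Y$. Using the assumptions $D_{\rm JS}(P_{Y_\theta}\|P_Y)\le\delta_P$ and $D_{\rm JS}(Q_{Y_\theta}\|Q_Y)\le\delta_Q$, together with Pinsker-type or known JS-to-total-variation inequalities, I would bound the difference between the ``$Y_\theta$'' mutual information terms and the ``$Y$'' mutual information terms. The fourth-root dependence $8\Delta^{1/4}$ strongly suggests two successive square-root losses: one from converting JS divergence to total variation ($D_{\rm TV}\le\sqrt{D_{\rm JS}}$ up to constants, or via $\sqrt{2 D_{\rm JS}}$), and a second from bounding an entropy difference by a TV distance via a continuity-of-entropy estimate (e.g.\ $|H(\mu)-H(\nu)| \le D_{\rm TV}(\mu,\nu)\log(|\mathcal{Y}|/D_{\rm TV})$ or a boundedness-based variant), which introduces another square root when the domain size is absorbed into constants. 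Tracking the constant $8$ precisely will require care but is routine once the chain of inequalities is fixed.

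For the main term, I would apply a Lipschitz/continuity bound on mutual information (or conditional entropy) under a change of the input marginal: the difference $|\mathbb{E}_{P_{X_R}}[g] - \mathbb{E}_{Q_{X_R}}[g]|$ is at most $\|g\|_\infty$ times a total-variation-type distance between $P_{X_R}$ and $Q_{X_R}$, which since $R = f_R(X)$ is at most the analogous distance between $P_X$ and $Q_X$ (data-processing). Then I would convert to JS divergence and decompose over the three components $X = (X_u, X_a, X_d)$: bounding $D_{\rm JS}(P_X\|Q_X)$ (or its square root) by $\sum_{z\in\{u,a,d\}} D_{\rm JS}(P_{X_z}\|Q_{X_z})$ up to the stated $\sqrt{2/3}$ factor — this is presumably a subadditivity property of (square-root) JS divergence over product-like structure, or follows from a union/triangle argument through intermediate hybrid distributions $P_{X_u}Q_{X_a}Q_{X_d}$, etc., combined with the concavity of $t\mapsto\sqrt t$ which yields the $1/\sqrt3$ normalization when three terms are combined.

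The main obstacle I anticipate is the decomposition step: controlling $D_{\rm JS}^{1/2}(P_X\|Q_X)$ by $\sqrt{2/3}\sum_z D_{\rm JS}^{1/2}(P_{X_z}\|Q_{X_z})$ requires either an independence-type assumption on the components or a telescoping argument through hybrids, and getting exactly the constant $\sqrt{2/3}$ (rather than a cruder $1$ or $\sqrt{3}$) will hinge on using Jensen/power-mean inequalities in the right direction. The secondary obstacle is ensuring that the conditional entropy terms $H(P_{Y|X_R})$, $H(Q_{Y|X_R})$ genuinely cancel or combine into the single $\widehat{H}$-bounded function rather than contributing an uncontrolled residual — this is where the shared-conditional assumption $Y|X$ must be used carefully, likely by pairing $I(P_{X_RY_\theta})$ with $I(Q_{X_RY_\theta})$ and $I(P_{X_RY})$ with $I(Q_{X_RY})$ before taking differences, so that only marginal-shift and model-reference-gap terms survive.
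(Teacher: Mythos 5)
Your overall architecture (reduce R-EMID to an $\widehat H$-weighted expectation difference over the input marginal plus a model--reference mismatch term controlled by $\Delta$) matches the spirit of the bound, but the paper does not re-derive that part: it directly invokes the prior EMID upper bound of Oh et al., $\widehat H\sqrt{2D_{\rm JS}(P_X\|Q_X)}+8\Delta^{1/4}$, swaps in the reasoning-augmented entropies $H(Q_{Y|x,r})+H(P_{Y_\theta|x,r})$, and then spends the entire proof on rewriting $\sqrt{2D_{\rm JS}(P_X\|Q_X)}$ in terms of the component shifts. So the only genuinely new step is the one you yourself flag as the main obstacle, and that is where your proposal has a real gap: no hybrid/telescoping or independence-free argument can give $D_{\rm JS}(P_X\|Q_X)\le \tfrac13\sum_{z}D_{\rm JS}(P_{X_z}\|Q_{X_z})$ in general, because by data processing the joint JS divergence is bounded \emph{below} by each marginal JS divergence, so the inequality you need runs the wrong way. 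The paper gets around this by an explicit modelling simplification rather than a theorem-level argument: it applies the KL chain rule to $D_{\rm KL}(P_{X_uX_aX_d}\|M)$ in all three component orderings, assumes the conditional divergence terms (e.g.\ $D_{\rm KL}(P_{X_a|X_u}\|M_{X_a|X_u})$) are negligible because the ID/OOD shift is taken to affect only the marginals, and thereby obtains $2D_{\rm JS}(P_X\|Q_X)=\tfrac23\sum_z D_{\rm JS}(P_{X_z}\|Q_{X_z})$, after which $\sqrt{a+b+c}\le\sqrt a+\sqrt b+\sqrt c$ gives the stated form. Without importing that assumption (which is not visible in the theorem statement), your plan cannot close.

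Two smaller corrections. First, the constant $\sqrt{2/3}$ does not come from concavity of $t\mapsto\sqrt t$ normalizing three terms; concavity gives the opposite inequality. It comes from averaging the three chain-rule decompositions (hence the factor $1/3$ inside the JS identity) followed by plain subadditivity of the square root. Second, your plan to re-derive the $8\Delta^{1/4}$ term via JS-to-TV and entropy-continuity estimates is plausible in outline but is left entirely schematic (the constant $8$ and the two square-root losses are asserted, not obtained); in the paper this term is simply inherited verbatim from the cited EMID bound, so if you want a self-contained proof you would have to actually reproduce that argument rather than treat it as routine.
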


\begin{proof}[Proof of Theorem~\ref{theorem:restatement:REMID_upperbound}]

We begin by recalling the derived upper bound on EMID by~\citet{oh-2025-understanding}.
\begin{theorem}[Restatement of Upper Bound on EMID~\cite{oh-2025-understanding}]
\label{theorem:emid_upper_bound_simple}
Given a model $P_\theta$ parameterized by $\theta$ and distributions $P_{XY}$, $Q_{XY}$ that share consistent conditional distributions $Y|X$, if there exist constants $\delta_{P}$ and $\delta_{Q}$ such that $D_{\rm JS}(P_{Y_{{\theta}}}\|P_{Y})\leq \delta_{P}, D_{\rm JS}(Q_{Y_{{\theta}}}\|Q_{Y})\leq \delta_{Q}, \Delta = \delta_{P} + \delta_{Q}$, then the $\text{EMID}(P_{XY}, Q_{XY}; P_\theta)$ is upper bounded by:
\begin{equation}\label{eq:app:EMID_bound_simple}
\footnotesize
\widehat{H} \sqrt{2D_{JS}(P_X||Q_X)} + 8\Delta^{\frac{1}{4}},
\end{equation}
where $\widehat{H}=\max_{x \in \mathcal{X}} [H(Q_{Y|x})+H(P_{Y_{\theta}|x})]$.
\end{theorem}

We present an intermediate expression of the upper bound derived by~\citet{oh-2025-understanding}, as their work focuses on multimodal LLM scenarios, where the input structure differs from the role-playing tasks we consider. To extend this upper bound to our setting, the following modifications are required:
1) The maximum entropy term is modified to account for the reasoning process variable $r$, yielding $\widehat{H} = \max_{x \in \mathcal{X}} [H(Q_{Y|x,r}) + H(P_{Y_{\theta}|x,r})]$;  
2) We further analyze and refine the term $\sqrt{2D_{\mathrm{JS}}(P_X\|Q_X)}$ to better align with the input components in role-playing settings.

First, we begin with a symmetric decomposition of the joint KL divergence by applying the chain rule for KL divergence~\cite{mackay-2003-information}.

\begin{align*}
D_{\rm KL}(P_{X_u X_a X_d} \parallel M_{X_u X_a X_d})
&= D_{\rm KL}(P_{X_u} \parallel M_{X_u})\\
&+\mathbb{E}_{P_{X_u}} \left[ D_{\rm KL}(P_{X_a|X_u} \parallel M_{X_a|X_u}) \right]
+\mathbb{E}_{P_{X_u X_a}} \left[ D_{\rm KL}(P_{X_d|X_u X_a} \parallel M_{X_d|X_u X_a}) \right], \\
\small
D_{\rm KL}(P_{X_u X_a X_d} \parallel M_{X_u X_a X_d})
&= D_{\rm KL}(P_{X_a} \parallel M_{X_a})\\
&+\mathbb{E}_{P_{X_a}} \left[ D_{\rm KL}(P_{X_u|X_a} \parallel M_{X_u|X_a}) \right]
+\mathbb{E}_{P_{X_a X_u}} \left[ D_{\rm KL}(P_{X_d|X_a X_u} \parallel M_{X_d|X_a X_u}) \right], \\
D_{\rm KL}(P_{X_u X_a X_d} \parallel M_{X_u X_a X_d})
&= D_{\rm KL}(P_{X_d} \parallel M_{X_d})\\
&+\mathbb{E}_{P_{X_d}} \left[ D_{\rm KL}(P_{X_u|X_d} \parallel M_{X_u|X_d}) \right]
+\mathbb{E}_{P_{X_d X_u}} \left[ D_{\rm KL}(P_{X_a|X_d X_u} \parallel M_{X_a|X_d X_u}) \right],
\end{align*}

where the mixture distributions are defined as $M_{X_i} = \frac{1}{2}(P_{X_i} + Q_{X_i})$ for $i \in \{u, a, d\}$.
In the role-playing tasks, the shift in the conditional distributions of input variables from the ID to OOD distributions have a relatively minor impact on model performance compared to the changes in the marginal distributions. This is mainly because these conditional distributions tend to reflect similar dependency under both ID and OOD, such as ``users are likely to ask queries related to their own user information''. Thus, following~\citet{oh-2025-understanding}, we consider a simplified scenario by focusing only on the KL divergence term of the marginal probability distributions. 
Then, summing these three equations, we obtain:

\begin{align}\label{eq:app:3DKL_1}
3 D_{\rm KL}(P_{X_u X_a X_d} \parallel M_{X_u X_a X_d})
:= D_{\rm KL}(P_{X_u} \parallel M_{X_u}) + D_{\rm KL}(P_{X_a} \parallel M_{X_a}) + D_{\rm KL}(P_{X_d} \parallel M_{X_d})
\end{align}

Similarly, for the distribution $Q$,

\begin{align}\label{eq:app:3DKL_2}
3 D_{\rm KL}(Q_{X_u X_a X_d} \parallel M_{X_u X_a X_d})
:= D_{\rm KL}(Q_{X_u} \parallel M_{X_u}) + D_{\rm KL}(Q_{X_a} \parallel M_{X_a}) + D_{\rm KL}(Q_{X_d} \parallel M_{X_d})
\end{align}

Then, based on Eqs.~\ref{eq:app:3DKL_1} and~\ref{eq:app:3DKL_2}, the decomposition of KL divergence, and the definition of the JS divergence, we derive the following equality:

\begin{align*}
2 D_{\rm JS}(P_X \parallel Q_X)
&= D_{KL}(P_{X_u X_a X_d} \parallel M_{X_u X_a X_d}) + D_{KL}(Q_{X_u X_a X_d} \parallel M_{X_u X_a X_d}) \\
&= \frac{1}{3} \left( D_{KL}(P_{X_u} \parallel M_{X_u}) + D_{KL}(P_{X_a} \parallel M_{X_a}) + D_{KL}(P_{X_d} \parallel M_{X_d}) \right) \\
&\quad + \frac{1}{3} \left( D_{KL}(Q_{X_u} \parallel M_{X_u}) + D_{KL}(Q_{X_a} \parallel M_{X_a}) + D_{KL}(Q_{X_d} \parallel M_{X_d}) \right) \\
&= \frac{2}{3} \left( \frac{1}{2} D_{KL}(P_{X_u} \parallel M_{X_u}) + \frac{1}{2} D_{KL}(Q_{X_u} \parallel M_{X_u}) \right) \\
&\quad + \frac{2}{3} \left( \frac{1}{2} D_{KL}(P_{X_a} \parallel M_{X_a}) + \frac{1}{2} D_{KL}(Q_{X_a} \parallel M_{X_a}) \right) \\
&\quad + \frac{2}{3} \left( \frac{1}{2} D_{KL}(P_{X_d} \parallel M_{X_d}) + \frac{1}{2} D_{KL}(Q_{X_d} \parallel M_{X_d}) \right) \\
&= \frac{2}{3} D_{JS}(P_{X_u} \parallel Q_{X_u}) + \frac{2}{3} D_{JS}(P_{X_a} \parallel Q_{X_a}) + \frac{2}{3} D_{JS}(P_{X_d} \parallel Q_{X_d}) \\
&= \frac{2}{3} \left( D_{JS}(P_{X_u} \parallel Q_{X_u}) + D_{JS}(P_{X_a} \parallel Q_{X_a}) + D_{JS}(P_{X_d} \parallel Q_{X_d}) \right),
\end{align*}

where the mixture distributions are defined as $M_{X_i} = \frac{1}{2}(P_{X_i} + Q_{X_i})$ for $i \in \{u, a, d\}$, consistent with $M_{X_u X_a X_d} = \frac{1}{2}(P + Q)$. Thus, we have:

\begin{align}\label{eq:app:2DJS_PX_QX}
\sqrt{2 D_{\rm JS}(P_X \parallel Q_X)}
&=\sqrt{\frac{2}{3}\sum_{z \in \{u, a, d\}} D_{\rm JS}(P_{X_z} \parallel Q_{X_z})}\\ \nonumber
&\leq \sqrt{\frac{2}{3}}\sum_{z \in \{u, a, d\}} D_{\rm JS}^{1/2}(P_{X_z} \parallel Q_{X_z}).
\end{align}

By substituting Eq.~\ref{eq:app:2DJS_PX_QX} into Eq.~\ref{eq:app:EMID_bound_simple}, we obtain:

\[
\mathrm{R\text{-}EMID}(P_{XY}, Q_{XY}; P_\theta) \leq \sqrt{2/3}~\widehat{H} \sum_{z \in \{u, a, d\}} D^{1/2}_{\rm JS}(P_{X_z} \| Q_{X_z}) + 8\Delta^{1/4},
\]

where $\widehat{H}=\max_{x \in \mathcal{X}} [H(Q_{Y|x,r})+H(P_{Y_{\theta}|x,r})]$ and $D_{\rm JS}(P_{Y_\theta} \parallel P_Y) \leq \delta_P,~D_{\rm JS}(Q_{Y_\theta} \parallel Q_Y) \leq \delta_Q,~\Delta = \delta_{P} + \delta_{Q}$. This completes the proof.  
\end{proof}

\clearpage
\newpage

\section{Details of CoRL}\label{sec:app:CoRL_details}
In Section~\ref{sec:theory_REMID_CoEvolve}, we propose a co-evolving reinforcement learning framework for estimating R-EMID. In this subsection, we provide a detailed description of the implementation of this framework, including: 1) the GRPO loss; 2) prompt templates for rollout sampling and reward computation in reinforcement learning; 3) details on the collection of reasoning processes used for SFT initialization of the two modules; and 4) other implementation details, such as training hyperparameters.

\paragraph{Group Relative Policy Optimization (GRPO)}
Following the standard Group Relative Policy Optimization (GRPO) design in~\citet{shao-2024-GRPO}, we adopt the following loss for optimizing the reasoning generator and the policy model in reinforcement learning.
We adopt the GRPO implementation from the TRL~\cite{vonwerra-2022-trl}.

\begin{align}
\mathcal{L}_{\text{GRPO}}(\theta) = 
-\frac{1}{\sum_{i=1}^{G}|o_i|} \sum_{i=1}^{G} \sum_{t=1}^{|o_i|} 
\Bigg[ 
\min \Bigg( 
\frac{\pi_{\theta}(o_{i,t} \mid q, o_{i,<t})}{\pi_{\theta_{\text{old}}}(o_{i,t} \mid q, o_{i,<t})} \hat{A}_{i,t}, \nonumber \\
\operatorname{clip} \left( 
\frac{\pi_{\theta}(o_{i,t} \mid q, o_{i,<t})}{\pi_{\theta_{\text{old}}}(o_{i,t} \mid q, o_{i,<t})},\, 1 - \epsilon,\, 1 + \epsilon 
\right) \hat{A}_{i,t}
\Bigg) 
\Bigg] 
- \beta\, \mathbb{D}_{\text{KL}}\left[\pi_\theta \parallel \pi_{\text{ref}}\right].
\end{align}

\begin{equation}
\footnotesize
\hat{A}_{i,t} = \frac{\text{reward}(q, o_{i,<t}) - \mathbb{E}_{o' \sim \pi_{\theta_{\text{old}}}(.\mid q)}[\text{reward}(q, o'_{i,<t})]}{\sqrt{\text{Var}_{o' \sim \pi_{\theta_{\text{old}}}(.\mid q)}[\text{reward}(q, o'_{i,<t})]}},
\end{equation}

where $\text{reward}(q,o)$ denotes the reward function.
During the optimization of the reasoning generator $q_{\phi_1}$ and the policy model $q_{\phi_2}$, their rollout strategies and reward computation methods differ, which will be described in detail below.

\paragraph{Prompt Templates for Rollout}
The prompt template used for sampling reasoning traces from the reasoning generator \( q_{\phi_1} \), i.e., \( r_i \sim q_{\phi_1}(\cdot \mid x) \), is shown in Table~\ref{tab:app:prompt_q1_rollout_part1} and Table~\ref{tab:app:prompt_q1_rollout_part2}. The prompt template used for sampling dialogue responses from the policy model \( q_{\phi_2} \), i.e., \( y_i \sim q_{\phi_2}(\cdot \mid x, r = q_{\phi_1}(\cdot \mid x)) \), is shown in Table~\ref{tab:app:prompt_q2_rollout}.

\paragraph{Prompt Templates for Reward Computation}

Specifically, the reward function used for optimizing $q_{\phi_1}$ is formulated as:
\begin{equation*}
\footnotesize
    \text{reward}(q=x,o=r_i) = \log q_{\phi_2}(y | x, r_i),
\end{equation*}
where $r_i \sim q_{\phi_1}(\cdot | x)$ and $y$ denotes the ground-truth response.
In the implementation, we construct a prompt template by combining \( x \) and \( r_i \) as the input to the \( q_{\phi_2} \) model, and use \( y \) as the target output. The final reward is computed as the sum of the log probabilities of each token in \( y \). We show an example of an input-output pair used for probability computation in Table~\ref{tab:app:prompt_q1_reward}.

The reward for optimizing $q_{\phi_2}$ is formulated as:
\begin{equation*}
\footnotesize
    \text{reward}(q=x,o=y_i) = \min\left(\frac{\pi_{\text{ref}}(y_i|x,r=q_{\phi_1}(x))}{\pi_{\text{ref}}(y|x,r=q_{\phi_1}(x))}, 1\right),
\end{equation*}
where $y_i \sim q_{\phi_2}(\cdot | x, r=q_{\phi_1}(x))$, $y$ denotes the ground-truth response, and $\pi_{\text{ref}}$ represents a reference model.
Similar to the reward computation described above, we also construct a prompt by combining \( x \), \( r \) and \( y \) as input to the reference model \( \pi_{\text{ref}} \), and compute the token probabilities of a specific output to obtain \( \pi_{\text{ref}}(y \mid x, r = q_{\phi_1}(\cdot \mid x)) \). We show an example of an input-output pair used for probability computation in Table~\ref{tab:app:prompt_q2_reward}.

\paragraph{Collection of Reasoning Processes for SFT Initialization}
To stabilize the training process of RL and accelerate convergence, both $q_{\phi_1}$ and $q_{\phi_2}$ are initialized via supervised fine-tuning (SFT) on pre-collected reasoning processes. 
Specifically, we use the prompt template in Table~\ref{tab:app:prompt_q1_rollout_part1} to prompt Qwen3-235B-A22B for collecting reasoning traces (an example of the generated reasoning process is shown in Table~\ref{tab:app:prompt_q1_rollout_part2}).  
The prompt templates and expected outputs used in SFT training are the same as those used in RL, as shown in Table~\ref{tab:app:prompt_q1_rollout_part1}, Table~\ref{tab:app:prompt_q1_rollout_part2}, and Table~\ref{tab:app:prompt_q2_rollout}.

\paragraph{Discussion on the Computational Cost of CoRL}
CoRL is only used during the training stage, and the trained model can be reused indefinitely. During inference, calculating R-EMID with the trained 8B-parameter model is significantly more efficient than LLM-as-a-judge approaches. For example, on 1,500 test samples, the cost comparison is shown in Table~\ref{tab:app:cost_compare}.

\begin{table}[t]
\centering
\begin{tabular}{lcc}
\toprule
 & \textbf{R-EMID (Ours)} & \textbf{LLM-as-a-Judge (Claude-3.7-Sonnet)} \\
\midrule
Time & $\sim$22 min (1$\times$A100) & $\sim$240 min (sequential API) \\
Cost & $\sim$\$0.54\footnotemark[1] & $\sim$\$6.75\footnotemark[2] \\
\bottomrule
\end{tabular}
\caption{Computational cost comparison on 1,500 test samples.}
\label{tab:app:cost_compare}
\end{table}

\footnotetext[1]{According to Amazon EC2 Capacity Blocks for ML pricing, the A100-80GB GPU is priced at \$1.475/hour. Therefore, a 22-minute evaluation corresponds to a cost of $1.475 \times (22/60) \approx \$0.54$.}
\footnotetext[2]{Using Claude-3.7-Sonnet as the judge model at \$3 per million tokens. Each evaluation sample consumes approximately 1,500 tokens, so 1,500 samples yield 2.25 million input tokens: $2.25\text{M} \times \$3/\text{M} = \$6.75$.}



\paragraph{Other Implementation Details}
We select the post-trained Qwen3-8B as the initial model for both the reasoning generator and the policy model.
For each RL iteration, the number of epochs is set to 3 for the reasoning generator and the policy model. In the co-evolving setup, each module is alternately optimized twice.
All other hyperparameters of GRPO are set to the default values from the TRL repository, such as $\beta = 0$, $\epsilon = 0.2$, and a learning rate of $1 \times 10^{-5}$. The number of rollouts generated per sample is 8.

\begin{table*}[t]
\footnotesize

\framebox[\textwidth][l]{
        \begin{minipage}{0.98\textwidth}
\textbf{\textit{Prompt Template for ``rollout of the reasoning generator''}}

\rule{\linewidth}{0.4pt}

\# Task Guidelines
Below, I will provide you with the input information for this task, including the user persona , agent character , and dialogue context.
Please generate a thought process based on the provided Format Template of Generated Thought , using the input information I give you.

\# Task Completion
\#\# Input Information
\#\#\# User Persona
'''
\{user\_persona\}
'''

\#\#\# Agent Character
'''
\{agent\_character\}
'''

\#\#\# Dialogue Context
'''
\{str\_dialogue\_context\}
'''

\#\# Reference Golden Agent Response
'''
\{agent\_golden\_response\}
'''

$>$ System: Please output the thought process in the specified format below. 

\#\# Format Template of Generated Thought

\#\#\# Part 1: Restatement of Key Information

\#\#\#\# Key Information in User Persona  
- **Interests and Values**: 
- **Aesthetic Preferences and Lifestyle**: 
- **Daily Habits and Behavioral Traits**: 
- **Professional Background and Role Values**: 
- **Recent Experience and Triggering Event**: 
- **Current Needs and Personal Alignment**: 

\#\#\#\# Key Information in Agent Character
- **Basic Identity and Background**: 
- **Occupation and Expertise**:  
- **Personality and Psychological Profile**: 
- **Language Style and Communication Approach**: 
- **Interests and Lifestyle Preferences**: 
- **Relationships and Social Behavior**: 
- **Values and Life Philosophy**: 

\#\#\#\# Key Information in Dialogue Context  
- **User's Past Engagement and State**:
- **Agent's Role and Response Style**: 
- **Current Dialogue Direction**: 

\#\#\# Summary of Key Information

\#\#\# Part 2: Iterative Revision

\#\#\#\# Trial 1  
**Initial attempt at capturing the essence of a strong response:**  
... (the detailed thinking process concerning the content the expected response is omitted, e.g., the expected golden response should include ... ) ...

**Based on this, the response might look like:**  
$>$ ""

Verification: 
Revision Suggestion: 

\#\#\#\# Trial 2  
**Refined understanding of what makes a great response:**  
... (the detailed thinking process concerning the style the expected response is omitted, e.g., the expected golden response should be ... ) ...

**Now the response could be:**  
$>$ ""

Verification: 
Revision Suggestion: 

\#\#\#\# Trial 3  
**Final integration of all essential elements for a perfect agent response:**  
... (the detailed thinking process is omitted) ...

**The response may become:**  
$>$ ""

Verification: 
Revision Suggestion: 

\#\#\# Part 3: Final Feature Set of the Golden Response  

Here are all the essential features that should be included in the final golden response:

```
[Core Features of the Golden Response]

\#\# I. Content Characteristics

\#\#\# 1. Alignment with User Persona  
- 
- 

\#\#\# 2. Embodiment of Agent Character  
- 
- 

\#\#\# 3. Continuity within Dialogue Context  
- 
-

\#\# II. Style Characteristics

\#\#\# 1. Tone and Language Suitable for the User  
- 
- 

\#\#\# 2. Expression Consistent with Agent Character  
- 
- 

\#\#\# 3. Naturalness within Dialogue Flow  
- 
- 
```

\#\# Generated Thought



        \end{minipage}
}
\caption{An example of the prompt template used for ``\textit{rollout of the reasoning generator}'' in CoRL.}
\label{tab:app:prompt_q1_rollout_part1}
\end{table*}

\begin{table*}[t]
\footnotesize

\framebox[\textwidth][l]{
        \begin{minipage}{0.98\textwidth}
\textbf{\textit{Expected Output for ``rollout of the reasoning generator''}}

\rule{\linewidth}{0.4pt}

\#\#\# Part 1: Restatement of Key Information
 
  \#\#\#\# Key Information in User Persona  
 - **Interests and Values**: Enjoys family rituals like visiting the corner store with his daughter, values personal legacy, and intellectual life  
 - **Aesthetic Preferences and Lifestyle**: ...(omitted)
 
  \#\#\#\# Key Information in Agent Character
 - **Basic Identity and Background**: Name is Friedrich Nietzsche, German philosopher and cultural critic — lived during the 19th century in Europe  
 - **Occupation and Expertise**: Role is philosopher and cultural thinker, skilled in critical analysis, poetic aphorism, and existential inquiry — holds no formal title but is regarded as one of the most influential thinkers of modernity  
 - **Personality and Psychological Profile**: ...(omitted)
 
  \#\#\#\# Key Information in Dialogue Context  
 - **User's Past Engagement and State**: The user ...(omitted)
 
  \#\#\# Summary of Key Information
 The user, centered on balancing familial and professional responsibilities with personal expression,...(omitted)
 
  ---
 
  \#\#\# Part 2: Iterative Revision
 
  \#\#\#\# Trial 1  
 **Initial attempt at capturing the essence of a strong response:**  
 Nietzsche should acknowledge the user’s dual life ...(omitted)
 
  **Based on this, the response might look like:**  
 $>$ "You live between the sterile halls ...(omitted)
 
  Verification: This response attempts...(omitted)
 Revision Suggestion: While conceptually...(omitted)
 
  \#\#\#\# Trial 2  
 **Refined understanding of what makes a great response:**  
 Nietzsche must reflect on the tension ...(omitted)
 
  **Now the response could be:**  
 $>$ "Punk rock and hospitals — ...(omitted)
 
  Verification: This version is more aligned ...(omitted)
 Revision Suggestion: While improved, this respons...(omitted)
 
  \#\#\#\# Trial 3  
 **Final integration of all essential elements for a perfect agent response:**  
 The final response should weave together ...(omitted)
 
  **The response may become:**  
 $>$ "Punk rock and hospitals — ...(omitted)
 
  Verification: This response ...(omitted)
  ---
 
  \#\#\# Part 3: Final Feature Set of the Golden Response  
 
  Here are all the essential features that should be included in the final golden response:
 
  ```
 [Core Features of the Golden Response]
 
  \#\# I. Content Characteristics
 
  \#\#\# 1. Alignment with User Persona  
 - Draws from the user's dual life of professional duty and personal aspiration (hospital and punk rock) to create personal resonance.  
 - Addresses the user’s current emotional fatigue and need for philosophical grounding, while respecting his values around family and legacy.
 
  \#\#\# 2. Embodiment of Agent Character  
 - Grounds insights in Nietzsche’s philosophy, particularly the idea of strength through adversity and the affirmation of life.  
 - Expresses ideas using poetic, metaphorical language consistent with Nietzsche’s introspective and provocative tone.
 
  \#\#\# 3. Continuity within Dialogue Context  
 - Builds on previously established themes of intensity, becoming, and existential purpose to maintain narrative flow.  
 - Responds directly to the mention of the punk rock club and the user’s location near his wife’s university job, advancing the conversation meaningfully.

 \#\# II. Style Characteristics
 
  \#\#\# 1. Tone and Language Suitable for the User  
 - Matches the user’s reflective and emotionally nuanced communication style — poetic without being abstract.  
 - Balances clarity and depth to align with the user’s comprehension level and preference for philosophical insight.
 
  \#\#\# 2. Expression Consistent with Agent Character  
 - Uses rhetorical patterns and metaphors typical of Nietzsche’s voice — contrasting opposites, invoking existential tension.  
 - Maintains a tone that reflects Nietzsche’s core traits — wisdom, provocation, and reverence for individual strength.
 
  \#\#\# 3. Naturalness within Dialogue Flow  
 - Reads smoothly within the ongoing exchange, avoiding jarring shifts in theme or tone.  
 - Echoes prior ideas like intensity, legacy, and self-overcoming while keeping the dialogue dynamic and emotionally resonant.
 ```

        \end{minipage}
}
\caption{An example of the expected output for ``\textit{rollout of the reasoning generator}'' in CoRL.}
\label{tab:app:prompt_q1_rollout_part2}
\end{table*}

\begin{table*}[t]
\footnotesize

\framebox[\textwidth][l]{
        \begin{minipage}{0.98\textwidth}
\textbf{\textit{Prompt Template for ``rollout of the policy model''}}

\rule{\linewidth}{0.4pt}

\# Task Completion
\#\# Input Information

\#\#\# User Persona
'''
\{user\_persona\}
'''

\#\#\# Agent Character
'''
\{agent\_character\}
'''

\#\#\# Dialogue Context
'''
\{str\_dialogue\_context\}
'''

\#\#\# Core Features of the Golden Response
'''
\{reasoning\_process\}
'''

\rule{\linewidth}{0.4pt}
\textbf{\textit{Expected Output for ``rollout of the policy model''}}

\rule{\linewidth}{0.4pt}

\#\# Agent Response

\{agent\_golden\_response\}

        \end{minipage}
}
\caption{The prompt template and expected output used for ``\textit{rollout of the policy model}'' in CoRL.}
\label{tab:app:prompt_q2_rollout}
\end{table*}

\begin{table*}[t]
\footnotesize

\framebox[\textwidth][l]{
        \begin{minipage}{0.98\textwidth}
\textbf{\textit{Prompt Template in ``reward computation of the reasoning generator's training''}}

\rule{\linewidth}{0.4pt}

\# Task Completion
\#\# Input Information

\#\#\# User Persona
'''
\{user\_persona\}
'''

\#\#\# Agent Character
'''
\{agent\_character\}
'''

\#\#\# Dialogue Context
'''
\{str\_dialogue\_context\}
'''

\rule{\linewidth}{0.4pt}
\textbf{\textit{Expected Output in ``reward computation of the reasoning generator's training''}}

\rule{\linewidth}{0.4pt}

\#\# Agent Response

\{agent\_golden\_response\}

        \end{minipage}
}
\caption{The prompt template and expected output used in ``\textit{reward computation of the reasoning generator's training}'' in CoRL.}
\label{tab:app:prompt_q1_reward}
\end{table*}

\begin{table*}[t]
\footnotesize
\framebox[\textwidth][l]{
        \begin{minipage}{0.98\textwidth}
\textbf{\textit{Prompt Template in ``reward computation of the policy model's training''}}

\rule{\linewidth}{0.4pt}

\# Task Completion
\#\# Input Information

\#\#\# User Persona
'''
\{user\_persona\}
'''

\#\#\# Agent Character
'''
\{agent\_character\}
'''

\#\#\# Dialogue Context
'''
\{str\_dialogue\_context\}
'''

\#\# Core Features of the Golden Response
'''
\{reasoning\_process\}
'''

```Reference Golden Response: \{expected\_golden\_response\}```

```Candidate Response: \{$y$\}```

If the given Reference Golden Response matches 100\% with the "Core Features of the Golden Response" as described, then please determine what percentage of the Core Features is matched by the given Candidate Response. Please output the percentage value directly, without analysis.

Percentage:

\rule{\linewidth}{0.4pt}
\textbf{\textit{Expected Output in ``reward computation of the policy model's training''}}

\rule{\linewidth}{0.4pt}

100\%

        \end{minipage}
}
\caption{The prompt template and expected output used in ``\textit{reward computation of the policy model's training}'' in CoRL.}
\label{tab:app:prompt_q2_reward}
\end{table*}

\clearpage
\newpage

\section{Details in Evaluating RPM Generalization}\label{sec:app:RPMG_eval_more_results}

\subsection{Implementation Details of RPM Training Methods}\label{sec:app:RPM_methods_details}
In Section~\ref{sec:evaluating_RPM_generalization}, we evaluate the generalization performance of several training methods for role-playing models. In this subsection, we describe the implementation details of these training methods~\footnote{In the main text, we only compared a few of these methods; here, we will compare a broader range of approaches.}.


\begin{itemize}
    \item \textbf{In-Context Learning (ICL)}~\cite{wang-2024-rolellm}: We retrieve the 5 most semantically similar samples from the ID data based on cosine similarity of input embeddings to use as ICL examples. We employ the lightweight Qwen3-Embedding-0.6B~\cite{yang-2025-qwen3} model to obtain embeddings for similarity computation.
    
    \item \textbf{Supervised Fine-Tuning (SFT)}: We use the prompt template and expected output shown in Table~\ref{tab:app:prompt_baseline_SFT} as the input and target output for SFT training.
    
    \item \textbf{Data Augmentation (DA)}~\cite{suresh-2024-diasynth}: We first prompt Qwen3-235B-A22B using the template in Table~\ref{tab:app:prompt_baseline_DA} to generate more diverse dialogues. The original samples and the newly generated ones are then mixed together for SFT training.
    
    \item \textbf{Direct Preference Optimization (DPO)}~\cite{chen-2025-compressDPO}: We first use the prompt in Table~\ref{tab:app:prompt_baseline_DPO} to obtain rejected responses. The original golden responses in the dataset are treated as chosen responses, and DPO training is performed using these pairs. DPO training is initialized from the checkpoint obtained via SFT.
    
    \item \textbf{Rejection Sampling Fine-Tuning (RFT)}: We first randomly sample 5 candidate responses for each sample from the SFT-trained model. Then, using the prompt template in Table~\ref{tab:app:prompt_baseline_reward}, we compute the probability of the expected output as a reward, which is similar to the reward computation for the policy model in CoRL, to select the best response. The reference model used to compute token generation probabilities is the base model before SFT. This selected response is then used as the target for continued SFT training, starting from the checkpoint previously trained on golden responses.
    
    \item \textbf{Reinforcement Learning (RL)}: We adopt Group Relative Policy Optimization (GRPO)~\cite{shao-2024-GRPO} as the RL algorithm.
    The RL training is initialized from the SFT checkpoint. The reward computation follows the same approach as in RFT. The prompt template used for rollouts is the same as in SFT, as shown in Table~\ref{tab:app:prompt_baseline_SFT}. The number of rollouts is set to 8.

    \item \textbf{SFT with Thinking modes (ThinkingSFT)}~\cite{tang-2025-thinking}: The key difference between ThinkingSFT and standard SFT is that the model in ThinkingSFT generates a thinking process before producing the final response. We collect thinking data distilled from Qwen3-235B-A22B. We provide an example in Table~\ref{tab:app:prompt_baseline_TGRPO} to illustrate the reasoning template. 
        
    \item \textbf{RL with Thinking modes (ThinkingRL)}~\cite{tang-2025-thinking}: We perform ThinkingRL training starting from the ThinkingSFT checkpoint. Reward computation follows the same method as in RFT.  The number of rollouts is set to 8.
\end{itemize}

\paragraph{Base LLMs for Training RPMs} We select Qwen3-4B~\cite{yang-2025-qwen3} and Meta-Llama-3-8B~\cite{meta-2023-llama3} as the base LLMs for training RPMs, considering different parameter sizes across models from two common LLM series. 
We choose the pre-trained versions of these models rather than the post-trained versions, as the latter may have been further trained on large amounts of unknown role-playing data~\cite{yang-2025-qwen3}. Such exposure could interfere with our analysis of the model's or training method's generalization performance on role-playing tasks. Therefore, we use the pre-trained checkpoints and build upon them with SFT and other training methods.

\paragraph{Experimental Environment}
For all experiments, we conduct training and inference on a single machine equipped with 8 Nvidia A100-80G GPUs. We use the vLLM framework~\cite{kwon-2023-vllm} for all LLM generation tasks. We use the TRL framework~\cite{vonwerra-2022-trl} for methods.

\subsection{A Novel Method for Improving RPM Generalization}\label{sec:app:new_method}
In Section~\ref{sec:theory_REMID_CoEvolve}, we propose CoRL, a method for estimating the R-EMID metric. 
CoRL iteratively optimizes both a reasoning generator and a policy model, enabling the co-evolution of the reasoning process and the estimation of the response generation probability, thereby ultimately improving the estimation of conditional response probabilities. 
Inspired by this approach, we argue that the same strategy can also be applied to train RPMs that generate role-playing responses based on reasoning processes. 
CoRL encourages the model to self-discover superior on-policy reasoning trajectories through its own exploration and keep the reasoning process and the model's responses in-distribution during optimization.
This is expected to cultivate reasoning capabilities that generalize better for role-playing tasks. To maintain fairness with ThinkingSFT and ThinkingRL, we use a single model here to jointly generate both the reasoning process and the final response. Besides, the reasoning templates used in ThinkingSFT, ThinkingRL, and our proposed method are kept the same (Table~\ref{tab:app:prompt_baseline_TGRPO}).

As shown in Table~\ref{tab:full_RPMs_eval}, our proposed method effectively enhances the generalization performance of RPMs. In future work, we will continue exploring ways to further improve the reasoning model to boost RPM generalization.

\begin{table*}[t]
\small
    \centering
    \setlength{\tabcolsep}{1.2mm}
\begin{tabular}{lccccccccccccc}
\toprule
\multicolumn{1}{c}{\multirow{2}{*}{\textbf{Method}}} & \multirow{2}{*}{\textbf{ID}} & \multicolumn{5}{c}{\textbf{User Shift}}                                                                                    &\multicolumn{4}{c}{\textbf{Character Shift}}                                                 &\multicolumn{2}{c}{\textbf{Dialogue Shift}} &\multicolumn{1}{c}{\multirow{2}{*}{\textbf{Avg.}}} \\
\cmidrule(lr){3-7}\cmidrule(lr){8-11}\cmidrule(lr){12-13}

\multicolumn{1}{c}{}         &     &G.   & S.  & C.  & J. & K.   & L.     & F.  & T.  & G.   & T.  & W.     & \\
\midrule

\multicolumn{14}{c}{\textit{Qwen3-4B}}    \\
\midrule
ICL    & {-0.065}      & {-0.221}      & {-0.245}      & {-0.430}      & {-0.295}      & -0.225 &-0.372 &-0.282 &-0.321 &-0.349 &-0.362 &-0.018 &-0.265   \\
\midrule
SFT    & {-0.067}      & {-0.047}      & {{-0.051}}      & {-0.118}      & {-0.096}      & -0.070 &-0.111 &-0.062 &-0.078 &-0.079 &\textbf{{-0.049}} &0.143  &-0.057   \\

DA      & -0.090 &{-0.019} &-0.058 &-0.133 &-0.083 &-0.047 &-0.102 &-0.072 &-0.089 &-0.094 &-0.098 &0.137  &-0.062   \\
DPO     & -0.287 &-0.125 &-0.218 &-0.387 &-0.159 &-0.091 &-0.340 &-0.312 &-0.355 &-0.354 &-0.292 &-0.111 &-0.253   \\
RFT     & -0.835 &-0.491 &-0.671 &-0.853 &-0.628 &-0.427 &-0.800 &-0.780 &-0.828 &-0.812 &-0.956 &-0.583 &-0.722   \\
RL    & {-0.050} &-0.036 &-0.055 &\textbf{{-0.104}} &{-0.076} &{-0.045} &{-0.078} &{-0.021} &{-0.073} &{-0.051} &-0.057 &{0.149}  &{-0.041}   \\
\midrule
ThinkingSFT & -0.121  &  -0.085  &  -0.116  &  -0.320  &  -0.087  &  -0.066  &  -0.131  &  -0.093  &  -0.141  &  -0.128  &  -0.145  &  0.078  &  -0.113\\ 
ThinkingRL & -0.196 & -0.206 & -0.185 & -0.269 & -0.156 & -0.292 & -0.198 & -0.137 & -0.191 & -0.186 & -0.175 & -0.007 & -0.183\\
\textbf{Ours} &\textbf{-0.040}&\textbf{-0.016}&\textbf{-0.033}&{-0.113}&\textbf{-0.036}&\textbf{-0.002}&\textbf{-0.049}&\textbf{-0.040}&\textbf{-0.038}&\textbf{-0.035}&-0.066&\textbf{0.153}&\textbf{-0.026}\\ 

\midrule
\multicolumn{14}{c}{\textit{Meta-Llama3-8B}}  \\
\midrule
ICL    & {-0.096}      & {-0.371}      & {-0.463}      & {-0.881}      & {-0.489}      & -0.330 &-0.579 &-0.479 &-0.519 &-0.586 &-0.415 &-0.014 &-0.435 \\
\midrule
SFT    & {-0.052}      & {-0.065}      & {{-0.092}}      & {-0.373}      & {-0.206}      & -0.145 &-0.082 &-0.026 &-0.072 &-0.056 &-0.081 &0.132  &-0.093 \\

DA      & -0.079 &-0.049 &-0.108 &{-0.335} &{-0.198} &{-0.144} &-0.061 &-0.033 &{-0.069} &-0.077 &{-0.080} &0.146  &-0.091 \\
DPO     & -0.586 &-0.312 &-0.371 &-0.678 &-0.421 &-0.273 &-0.564 &-0.488 &-0.586 &-0.631 &-0.565 &-0.387 &-0.489 \\
RFT     & -0.303 &-0.130 &-0.204 &-0.492 &-0.296 &-0.259 &-0.180 &-0.184 &-0.193 &-0.186 &-0.241 &-0.127 &-0.233 \\
RL    & \textbf{{-0.034}} &{-0.048} &-0.115 &-0.364 &-0.200 &-0.159 &{-0.056} &{-0.008} &-0.076 &{-0.042} &-0.101 &{0.164}  &{-0.087} \\
\midrule

ThinkingSFT & -0.107  &  -0.040  &  -0.118  &  -0.345  &  -0.201  &  -0.144  &  -0.104  &  -0.050  &  -0.080  &  -0.110  &  -0.105  &  0.116  &  -0.107\\ 
ThinkingRL & -0.272 & -0.113 & -0.219 & -0.442 & -0.243 & -0.186 & -0.254 & -0.141 & -0.222 & -0.245 & -0.218 & -0.059 & -0.218\\  
\textbf{Ours} &-0.049&\textbf{-0.017}&\textbf{-0.077}&\textbf{-0.315}&\textbf{-0.162}&\textbf{-0.120}&\textbf{-0.031}&\textbf{-0.007}&\textbf{-0.021}&\textbf{-0.029}&\textbf{-0.044}&\textbf{0.181}&\textbf{-0.058}\\ 
\bottomrule
\end{tabular}
    
    \caption{Experimental results (measured by \textit{R-EMI}, higher is better) on in-distribution (ID) and three shifted distributions. User shift (\texttt{G–German, S–Spanish, C–Chinese, J–Japanese, K–Korean}), character shift (\texttt{L–literature, F–film, T–theater, G–gaming}), and dialogue compositional shift (\texttt{T–turn-level composition, W–word-level composition}). The best results are in \textbf{bold}. }
    \label{tab:full_RPMs_eval}
\end{table*}

\subsection{Case Study}
To intuitively help readers understand the differences among RPMs trained using various methods, we present a case study in Figure~\ref{fig:app_case_study}. From this figure, we observe that while methods like SFT and RL can perform basic role-playing tasks, such as aligning with the character’s background and acknowledging the user’s identity, only our method delivers responses that simultaneously and seamlessly integrate both the user’s personal context and the agent’s character identity, while evoking genuine emotional resonance and empathy from the user.

\begin{figure}[t]
\centering
\includegraphics[width=0.91\textwidth]{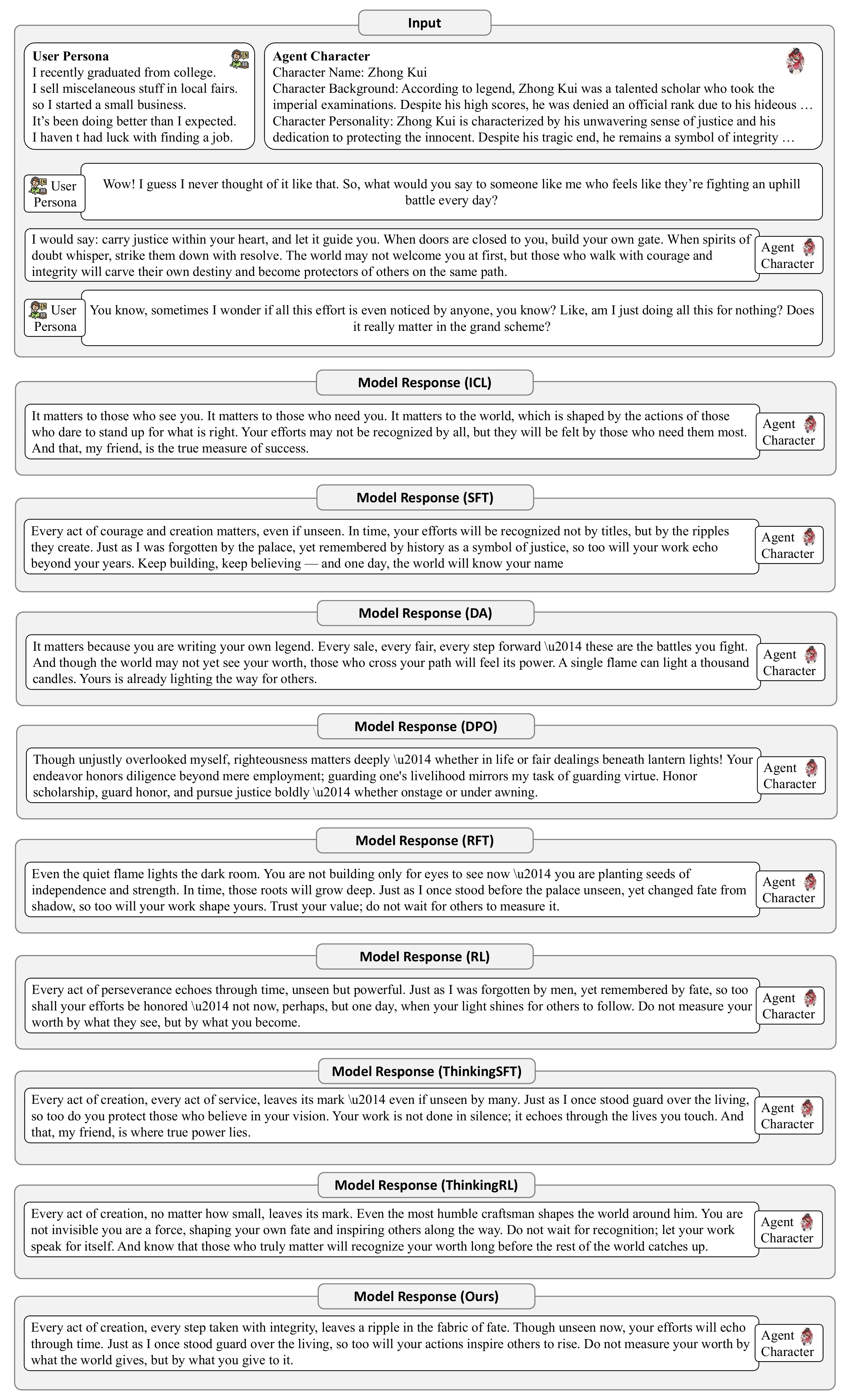}
\caption{A case study comparing responses from different RPM training methods. This case is sampled from the test set of character shift. Here, we omit part of the dialogue context for brevity.}
\label{fig:app_case_study}
\end{figure}

\clearpage
\newpage

\begin{table*}[t]
\footnotesize

\framebox[\textwidth][l]{
        \begin{minipage}{0.98\textwidth}
\textbf{\textit{Prompt Template}}

\rule{\linewidth}{0.4pt}
$<$User Persona$>$\{user\_persona\}$<$/User Persona$>$

$<$Agent Character$>$\{agent\_character\}$<$/Agent Character$>$

$<$Dialogue Context$>$\{str\_dialogue\_context\}$<$/Dialogue Context$>$

\rule{\linewidth}{0.4pt}
\textbf{\textit{Expected Output}}

\rule{\linewidth}{0.4pt}
$<$Agent Response$>$\{agent\_golden\_response\}$<$/Agent Response$>$
        \end{minipage}
}
\caption{The prompt template and expected output used for SFT and GRPO methods for training role-playing models.}
\label{tab:app:prompt_baseline_SFT}
\end{table*}

\begin{table*}[t]
\footnotesize

\framebox[\textwidth][l]{
        \begin{minipage}{0.98\textwidth}
\textbf{\textit{Prompt Template (step 1)}}

\rule{\linewidth}{0.4pt}

Your task is to imagine how a conversation might unfold between a user with the provided persona and an agent with the specified character profile.

Given the following information:

- **User Persona**: "\{user\_persona\}"
- **Agent Character Profile**:
'''
\{agent\_character\}
'''

\#\#\# Instructions:
1. Please provide information regarding the following aspects:
   1) What topic would the user start the conversation with?
   2) What questions would the user ask the agent in the middle of the conversation?
   3) How would the agent respond in a way that keeps the conversation engaging and flowing forward?
   Use your imagination to answer these three questions for each possible scenario.

2. Your responses should be closely aligned with both the user persona and the agent's character, as different users and agent profiles can lead to very different conversations.

3. Provide **three distinct possibilities**, each of which must adhere to the requirements in point 1 and 2.

4. Output the result **strictly in the JSON format** shown below.

\#\#\# Output Format:

```json
\{\{
  "possibility\_1": "",
  "possibility\_2": "",
  "possibility\_3": ""
\}\}
```

\rule{\linewidth}{0.4pt}
\textbf{\textit{Prompt Template (step 2)}}

\rule{\linewidth}{0.4pt}

You are tasked with generating a realistic 4-round dialogue between a user and an agent.

Given the following information:

- **User Persona**: "\{user\_persona\}"
- **Chat History Context Seed (for background reference only)**: 
'''
\{context\_seed\}
'''
- **Agent Character Profile**:
'''
\{agent\_character\}
'''

\#\#\# Instructions:
1. Generate a natural and coherent 4-round conversation between the user and the agent.
2. Ensure that each round logically follows from the previous one, building a meaningful interaction.
3. The agent’s responses must reflect their personality, tone, and traits as described in the character profile.
4. When crafting *user\_query\_4*, design it in such a way that the agent can provide a relevant and accurate response **only if** both the user persona **and** the agent's character traits are taken into account.
5. You may introduce new topics or scenarios not present in the context seed, as long as they align with the personas and maintain logical flow.
6. Output the result strictly in the JSON format shown below.

\#\#\# Output Format:

```json
\{\{
  "user\_query\_1":"",
  "agent\_response\_1":"",
  "user\_query\_2":"",
  "agent\_response\_2":"",
  "user\_query\_3":"",
  "agent\_response\_3":"",
  "user\_query\_4":"",
  "agent\_response\_4":"",
\}\}
```

        \end{minipage}
}
\caption{The prompt template used for generating more diverse dialogues in the data augmentation (DA) method. Step 1 generates multiple possible dialogue topics from the original samples, which are then used as the ``Chat History Context Seed'' in Step 2.}
\label{tab:app:prompt_baseline_DA}
\end{table*}

\begin{table*}[t]
\footnotesize

\framebox[\textwidth][l]{
        \begin{minipage}{0.98\textwidth}
\textbf{\textit{Prompt Template}}

\rule{\linewidth}{0.4pt}
$<$User Persona$>$The user persona is unknown.$<$/User Persona$>$

$<$Agent Character$>$The agent character is unknown.$<$/Agent Character$>$

$<$Dialogue Context$>$\{str\_dialogue\_context\}$<$/Dialogue Context$>$

        \end{minipage}
}
\caption{The prompt template for collecting rejection responses used for the DPO method for training role-playing models.}
\label{tab:app:prompt_baseline_DPO}
\end{table*}

\begin{table*}[t]
\footnotesize

\framebox[\textwidth][l]{
        \begin{minipage}{0.98\textwidth}
\textbf{\textit{Prompt Template}}

\rule{\linewidth}{0.4pt}

\# Task Completion
\#\# Input Information

\#\#\# User Persona
'''
\{user\_persona\}
'''

\#\#\# Agent Character
'''
\{agent\_character\}
'''

\#\#\# Dialogue Context
'''
\{str\_dialogue\_context\}
'''

```Reference Golden Response: \{expected\_golden\_response\}```

```Candidate Response: \{rollout\_response\}```

If the given Reference Golden Response matches 100\% with the given user persona and agent character, then please determine what percentage of the given Candidate Response matches. Please output the percentage value directly, without analysis.
Percentage:

\rule{\linewidth}{0.4pt}
\textbf{\textit{Expected Output}}

\rule{\linewidth}{0.4pt}

100\%

        \end{minipage}
}
\caption{Prompt template and expected output used in the reward computation for RFT, GRPO, and ThinkingRL methods in training role-playing models.}
\label{tab:app:prompt_baseline_reward}
\end{table*}

\begin{table*}[t]
\footnotesize

\framebox[\textwidth][l]{
        \begin{minipage}{0.98\textwidth}
  \#\# I. Content Characteristics
 
  \#\#\# 1. Alignment with User Persona  
 - Draws from the user's dual life of professional duty and personal aspiration (hospital and punk rock) to create personal resonance.  
 - Addresses the user’s current emotional fatigue and need for philosophical grounding, while respecting his values around family and legacy.
 
  \#\#\# 2. Embodiment of Agent Character  
 - Grounds insights in Nietzsche’s philosophy, particularly the idea of strength through adversity and the affirmation of life.  
 - Expresses ideas using poetic, metaphorical language consistent with Nietzsche’s introspective and provocative tone.
 
  \#\#\# 3. Continuity within Dialogue Context  
 - Builds on previously established themes of intensity, becoming, and existential purpose to maintain narrative flow.  
 - Responds directly to the mention of the punk rock club and the user’s location near his wife’s university job, advancing the conversation meaningfully.

 \#\# II. Style Characteristics
 
  \#\#\# 1. Tone and Language Suitable for the User  
 - Matches the user’s reflective and emotionally nuanced communication style — poetic without being abstract.  
 - Balances clarity and depth to align with the user’s comprehension level and preference for philosophical insight.
 
  \#\#\# 2. Expression Consistent with Agent Character  
 - Uses rhetorical patterns and metaphors typical of Nietzsche’s voice — contrasting opposites, invoking existential tension.  
 - Maintains a tone that reflects Nietzsche’s core traits — wisdom, provocation, and reverence for individual strength.
 
  \#\#\# 3. Naturalness within Dialogue Flow  
 - Reads smoothly within the ongoing exchange, avoiding jarring shifts in theme or tone.  
 - Echoes prior ideas like intensity, legacy, and self-overcoming while keeping the dialogue dynamic and emotionally resonant.

        \end{minipage}
}
\caption{An example to illustrate the thinking template used in the ThinkingSFT and ThinkingRL method for training role-playing models.}
\label{tab:app:prompt_baseline_TGRPO}
\end{table*}

\end{document}